\newcommand{\la}{\langle}
\newcommand{\ra}{\rangle}
\newcommand{\algname}{\text{BatchNeuralUCB}}
\newcommand{\gdname}{\text{TrainNN}}
\newcommand{\alglinelabel}{%
  \addtocounter{ALC@line}{-1}
  \refstepcounter{ALC@line}
  \label
}
\newcommand*{\rom}[1]{\expandafter\@slowromancap\romannumeral #1@}
\title{\huge Batched Neural Bandits} 
\author
{
	Quanquan Gu\thanks{Department of Computer Science, University of California, Los Angeles, CA 90095, USA; e-mail: {\tt qgu@cs.ucla.edu}} 
	~
	Amin Karbasi\thanks{Electrical Engineering, Computer Science, and  Statistics,  Yale University; e-mail: {\tt amin.karbasi@yale.edu}} 
	~
	Khashayar Khosravi\thanks{Google Research NYC; e-mail: {\tt khosravi@google.com}} 
	~
	Vahab Mirrokni\thanks{Google Research NYC; e-mail: {\tt mirrokni@google.com}} 
	~
	Dongruo Zhou\thanks{Department of Computer Science, University of California, Los Angeles, CA 90095, USA; e-mail: {\tt drzhou@cs.ucla.edu}} 
}
\begin{document}
\date{}
\maketitle
\begin{abstract}
In many sequential decision-making problems, the individuals are split into several batches and the decision-maker is only allowed to change her policy at the end of batches. These batch problems have a large number of applications, ranging from clinical trials to crowdsourcing. Motivated by this, we study the stochastic contextual bandit problem for general reward distributions under the batched setting. We propose the BatchNeuralUCB algorithm which combines neural networks with optimism to address the exploration-exploitation tradeoff while keeping the total number of batches limited. We study BatchNeuralUCB under both fixed and adaptive batch size settings and prove that it achieves the same regret as the fully sequential version while reducing the number of policy updates considerably. We confirm our theoretical results via simulations on both synthetic and real-world datasets.
\end{abstract}
\section{Introduction}\label{sec:intro}

In the stochastic contextual bandit problem, a learner sequentially picks actions over $T$ rounds (the \emph{horizon}). At each round, the learner observes $K$ actions, each associated with a $d$-dimensional feature vector. After selecting an action, she receives stochastic reward. Her goal is to maximize the cumulative reward attained over the horizon. Contextual bandits problems have been extensively studied in the literature \citep{langford2007epoch,  bubeck12regret,lattimore20bandit} and have a vast number of applications such as personalized news recommendation \citep{li2010contextual} and healthcare (see \citet{bouneffouf2019survey} and references therein).

Various reward models have been considered in the literature, such as linear models \citep{auer2002using,dani2008stochastic,li2010contextual, chu2011contextual,    abbasi2011improved, agrawal2013thompson}, generalized linear models \citep{filippi2010parametric, li2017provably}, and kernel-based models \citep{srinivas2009gaussian, valko2013finite}. Recently, neural network models that allow for a more powerful approximation of the underlying reward functions have been proposed \citep{riquelme2018deep, zhou2020neural, zhang2020neural,xu2020neural}. For example, the NeuralUCB algorithm \citep{zhou2020neural} can achieve near-optimal regret bounds while only requiring a very mild boundedness assumption on the rewards. However, a major shortcoming of NeuralUCB is that it requires updating the neural network parameters in every round, as well as optimizing the loss function over observed rewards and contexts. This task is computationally expensive and makes NeuralUCB considerably slow for large-scale applications and inadequate for the practical situations where the data arrives at a fast rate \citep{chapelle2011empirical}. 

In addition to the computational issues, many real-world applications require limited adaptivity, which allows the decision-maker to update the policy at only certain time steps. Examples include multi-stage clinical trials \citep{perchet2016batched}, crowdsourcing platforms \citep{kittur2008crowdsourcing}, and running time-consuming simulations for reinforcement learning \citep{le2019batch}. This restriction formally motivates the \emph{batched} multi-armed bandit problem that was studied first in \citet{perchet2016batched} for the two-armed bandit with noncontextual rewards. Their results have been recently extended to many-armed setting \citep{gao2019batched,esfandiari2019batched,jin2020double}, linear bandits \citep{esfandiari2019batched}, and also linear contextual bandits \citep{han2020sequential, ruan2020linear}. A closely related literature of \emph{rarely switching} multi-armed bandit problem measures the limited adaptivity by the number of policy switches \citep{cesa2013online, dekel2014bandits, simchi2019phase, ruan2020linear}, where in contrast to the batched models that the policy updates are at pre-fixed time periods, the policy updates are adaptive and can depend on the previous context and reward observations. While these papers provide a complete characterization of the optimal number of policy switches in both cases for the stochastic contextual bandit problem with the linear reward, the extension of results to more general rewards remains unstudied.

In this paper, we propose a \algname~algorithm that uses neural networks for estimating rewards while keeping the total number of policy updates to be small. \algname~addresses both limitations described above: (1) it reduces the computational complexity of NeuralUCB, allowing its usage in large-scale applications, and (2) it limits the number of policy updates, making it an excellent choice for settings that require limited adaptivity. It is worth noting that while the idea of limiting the number of updates for neural networks has been used in \citet{riquelme2018deep,xu2020neural} and the experiments of \citet{zhou2020neural}, no formal results on the number of batches required or the optimal batch selection scheme are provided. Our main contributions can be summarized as follows. 

\begin{itemize}
    \item We propose $\algname$ which, in sharp contrast to NeuralUCB, only updates its network parameters at most $B$ times, where $B$ is the number of batches. We propose two update schemes: the \emph{fixed} batch scheme where the batch grid is pre-fixed, and the \emph{adaptive} batch scheme where the selection of batch grid can depend on previous contexts and observed rewards. When $B = T$, $\algname$ degenerates to NeuralUCB. 
    \item We prove that for $\algname$ with fixed batch scheme, the regret is bounded by $\tilde O(\tilde d\sqrt{T} + \tilde dT/B)$, where $\tilde d$ is the effective dimension (See Definition \ref{def:effective_dimension}). For adaptive batch scheme, for any choice of $q$, the regret is bounded by $\tilde O(\sqrt{\max\{q, (1+TK)^{\tilde d}/q^B\}}\tilde d\sqrt{T})$, where $q$ is the parameter that determines the adaptivity of our algorithm (See Algorithm \ref{algorithm:3} for details), and $K$ is the number of arms. Therefore, to obtain the same regret as its fully sequential counterpart, \algname~only requires $\Omega(\sqrt{T})$ for fixed and $\Omega(\log T)$ for adaptive batch schemes. These bounds match the lower bounds presented in the batched linear bandits \citep{han2020sequential} and rarely switching linear bandits \citep{ruan2020linear} respectively. 

    \item We carry out numerical experiments over synthetic and real datasets to confirm our theoretical findings. In particular, these experiments demonstrate that in most configurations with fixed and adaptive schemes, the regret of the proposed $\algname$ remains close to the regret of the fully sequential NeuralUCB algorithm, while the number of policy updates as well as the running time are reduced by an order of magnitude.
    
\end{itemize}

\textbf{Notations} We use lower case letters to denote scalars,  lower and upper case bold letters to denote vectors and matrices. We use $\| \cdot \|$ to indicate Euclidean norm, and for a semi-positive definite matrix $\bSigma$ and any vector $\xb$, $\| \xb \|_{\bSigma} := \| \bSigma^{1/2} \xb \| = \sqrt{\xb^{\top} \bSigma \xb}$. 
We also use the standard $O$ and $\Omega$ notations. We say $a_n = O(b_n)$ if and only if $\exists C > 0, N > 0, \forall n > N, a_n \le C b_n$; $a_n = \Omega(b_n)$ if $a_n \ge C b_n$. The notation $\tilde{O}$ is used to hide logarithmic factors. Finally, we use the shorthand that $[n]$ to denote the set of integers $\{1, . . . , n\}$.

\section{Related Work}\label{sec:related}
The literature on the contextual multi-armed problem is vast. Due to the space limitations, we only review the existing work on batched bandits and bandits with function approximations here and refer the interested reader to recent monographs by \citet{slivkins2019introduction} and \citet{lattimore20bandit} for a thorough overview. 

\paragraph{Batched Bandits.} The design of batched multi-armed bandit models can be traced back to UCB2 \citep{auer02finite} and Improved-UCB \citep{auer2010ucb} algorithms originally for the fully sequential setting. \citet{perchet2016batched} provided the first systematic analysis of the batched stochastic multi-armed bandit problem and established near-optimal gap-dependent and gap-independent regrets for the case of two arms ($K=2$). \citet{gao2019batched} extended this analysis to the general setting of $K>2$. They proved regret bounds for both adaptive and non-adaptive grids. \citet{esfandiari2019batched} improved the gap-dependent regret bound for the stochastic case and provided lower and upper bound regret guarantees for the adversarial case. They also establish regret bounds for the batched stochastic linear bandits. 

Our work in the batched setting is mostly related to \citet{han2020sequential, ruan2020linear}. In particular, \citet{han2020sequential} studied the batched stochastic linear contextual bandit problem for both adversarial and stochastic contexts. For the case of adversarial contexts, they show that the number of batches $B$ should be at least $\Omega(\sqrt{dT})$. 
\citet{ruan2020linear} studied the batched contextual bandit problem using distributional optimal designs and extended the result of \citep{han2020sequential}. They also studied the minimum adaptivity needed for the rarely switching contextual bandit problems in both adversarial and stochastic context settings. In particular, for adversarial contexts, they proved a lower bound of $\Omega((d \log T)/\log(d \log T))$. Our work, however, is different from \citet{han2020sequential, ruan2020linear} as we do not require any assumption on the linearity of the reward functions; similar to NeuralUCB \citep{zhou2020neural}, our regret analysis only requires the rewards to be bounded.


\paragraph{Bandits with Function Approximation.} Given the fact that Deep Neural Network (DNN) models enable the learner to make use of nonlinear models with less domain knowledge, \citet{riquelme2018deep,zahavy2019deep} studied \emph{neural-linear bandits}. 
In particular, they used all but the last layers of a DNN as a feature map, which transforms contexts from the raw input space to a low-dimensional space, usually with better representation and less frequent updates. Then they learned a linear exploration policy on top of the last hidden layer of the DNN with more frequent updates. Even though these attempts have achieved great empirical success, they do not provide any regret guarantees. \citet{zhou2020neural} proposed NeuralUCB algorithm that uses neural networks to estimate reward functions while addressing the exploration-exploitation tradeoff using the upper confidence bound technique.  \citet{zhang2020neural} extended their analysis to Thompson Sampling. \citet{xu2020neural} proposed Neural-LinUCB which shares the same spirit as \emph{neural-linear bandits} and proved $\tilde O(\sqrt{T})$ regret bound.

\section{Problem Setting}\label{sec:setting}

In this section, we present the technical details of our model and our problem setting. 

\paragraph{Model.} We consider the stochastic $K$-armed contextual bandit problem, where the total number of rounds $T$ is known. At round $t \in [T]$, the learner observes the context consisting of $K$ feature vectors: $\{\xb_{t,a} \in \RR^d~|~a \in [K]\}$. For brevity, we denote the collection of all contexts $\{\xb_{1,1}, \xb_{1,2},\dots,\xb_{T,K}\}$ by $\{\xb^i\}_{i=1}^{TK}$. 

\vspace{-5pt}

\paragraph{Reward.} Upon selecting action $a_t$,
she receives a stochastic reward $r_{t,a_t}$. In this work, we make the following assumption about reward generation: for any round $t$,
\begin{align}
    r_{t,a_t} = h(\xb_{t,a_t}) + \xi_t,\label{eq:reward function}
\end{align}
where $h$ is an unknown function satisfying $0 \leq h(\xb) \leq 1$ for any $\xb$, and $\xi_t$ is $\nu$-sub-Gaussian noise conditioned on $\xb_{1,a_1},\dots,\xb_{t-1,a_{t-1}}$ satisfying $\EE [\xi_t|\xb_{1,a_1},\dots,\xb_{t-1,a_{t-1}}] = 0$.

\vspace{-5pt}

\paragraph{Goal.} The learner wishes to maximize the following \emph{pseudo regret} (or \emph{regret} for short):
\begin{align}
    R_T = \EE\bigg[\sum_{t=1}^T (r_{t,a_t^*} - r_{t,a_t})\bigg],\label{def:regret}
\end{align}
where $a_t^* = \argmax_{a \in[K]}\EE [r_{t,a}]$ is the optimal action at round $t$ that maximizes the expected reward. 

\vspace{-5pt}

\paragraph{Reward Estimation.} In order to learn the reward function $h$ in Eq. \eqref{eq:reward function}, we propose to use a fully connected neural networks with depth $L \geq 2$:
\begin{align}
    &f(\xb; \btheta) = \sqrt{m}\Wb_L \sigma\Big(\Wb_{L-1}\sigma\big(\cdots \sigma (\Wb_1\xb)\big)\Big), \label{def:network}
\end{align}
where $\sigma(x) = \max\{x,0\}$ is the rectified linear unit (ReLU) activation function, $\Wb_1 \in \RR^{m \times d}, \Wb_i \in \RR^{m \times m}, 2 \leq i \leq L-1, \Wb_L \in \RR^{m \times 1}$, and $\btheta = [\text{vec}(\Wb_1)^\top,\dots,\text{vec}(\Wb_L)^\top]^\top \in \RR^{p}$ with $p = m+md+m^2 (L-1)$. Without loss of generality, we assume that the width of each hidden layer is the same (i.e., $m$) for convenience in the analysis. We denote the gradient of the neural network function by $\gb(\xb; \btheta) = \nabla_{\btheta} f(\xb; \btheta) \in \RR^p$. 

\vspace{-5pt}

\paragraph{Batch Setting.} In this work, we consider the \emph{batch bandits} setting in which the entire horizon of $T$ rounds is divided into $B$ batches. Formally, we define a grid $\mathcal{T} = \{t_0, t_1, \cdots, t_B \}$, where $1 = t_0 < t_1 < t_2 < \cdots < t_B = T+1$ are the start and end rounds of the batches. Here, the interval $[t_{b-1}, t_b)$ indicates the rounds belonging to batch $b \in [B]$. The learner selects her policy at the beginning of each batch and executes it for the entire batch. She observes all collected rewards at the end of this batch and then updates her policy for the next batch. The batch model consists of two specific schemes. In the \emph{fixed batch size scheme}, the points in the grid $\mathcal{T}$ are pre-fixed and cannot be altered during the execution of the algorithm. In the \emph{adaptive batch size scheme}, however, the beginning and the end rounds of each batch are decided dynamically by the algorithm. 

\section{Algorithms}\label{sec:algorithms}


We propose our algorithm $\algname$ in this section. In essence, $\algname$ uses a neural network $f(\xb; \btheta)$ to predict the reward of the context $\xb$ and upper confidence bounds computed from the network to guide the exploration \citep{auer2002using}, which is similar to NeuralUCB \citep{zhou2020neural}. The main difference is that $\algname$ does not update its parameter $\btheta$ at each round. Instead, $\algname$ specifies either a fixed or adaptive batch grid $\mathcal{T}= \{t_0, t_1,\dots, t_B\}$. At the beginning of the $b$-th batch, the algorithm updates the parameter $\btheta$ of the neural network to $\btheta_b$ by optimizing a regularized square loss trained on all observed contexts and rewards using gradient descent. The training procedure is described in Algorithm \ref{algorithm:GD}. 
Meanwhile, within each batch, $\algname$ maintains the covariance matrix $\Zb_{t_b}$ which is calculated over the gradients of the observed contexts, each taken with respect to the estimated parameter of the neural network at the beginning of that contexts' corresponding batch. Based on $\btheta_b$ and $\Zb_{t_b}$, $\algname$ calculates the UCB estimate of reward  $f_b(\cdot)$, as Line \ref{alg:line1} in Algorithm \ref{algorithm:3} suggests. The function $f_b(\cdot)$ is used to select actions during the $b$-th batch. In particular, at round $t$, $\algname$ receives contexts $\{\xb_{t,a}\}_{a=1}^K$ and picks $a_t$ which maximizes the optimistic reward $f_b(x_{t,a})$ (see Line \ref{alg:actionselection}). Once this batch finishes, the rewards $r_{t, a_t}$ collected during this batch are observed (Line \ref{alg:reward}), and the process continues.

\subsection{Fixed Batch Size Scheme}\label{sec:fixed-batch}
For the fixed batch scheme, $\algname$ predefines the batch grid $\mathcal{T} = \{t_0, t_1, \dots\, t_B \}$ as a deterministic set depending on the time horizon $T$ and number of batches $B$.

\begin{algorithm}[ht]
	\caption{$\algname$}\label{algorithm:3}
	\begin{algorithmic}[1]
	\REQUIRE A neural network $f(\xb; \btheta)$ initialized with parameter $\btheta_0$, batch number $B$, ratio parameter $q$ (only needed for adaptive batch size), regularization parameter $\lambda$, step size $\eta$, number of gradient descent steps $J$
	\STATE $\Zb_1 = \lambda \Ib$, $b = 0$, $t_0 = 1$
    \FOR{$t = 1,\dots,T$}
    \IF {
    \begin{align}
    &\textbf{Fixed Batch Size Scheme: }t = b\cdot \lfloor T/B\rfloor +1\alglinelabel{alg:fix} \\
        &\textbf{Adaptive Batch Size Scheme: }\det(\Zb_t)>q\cdot \det(\Zb_{t_b})\text{ and }b \leq B-2, \alglinelabel{alg:ada}
    \end{align}
    }
    \STATE $b \leftarrow b+1$, $t_{b} \leftarrow t$
    \STATE Observe rewards $\{r_{i, a_i}\}_{i=t_{b-1}}^{t_{b}-1}$ corresponding to contexts $\{\xb_{i, a_i}\}_{i=t_{b-1}}^{t_{b}-1}$ \alglinelabel{alg:reward}
    \STATE $\btheta_b \leftarrow \text{TrainNN} (\lambda, \eta, J, m,\{\xb_{i, a_i}\}_{i=1}^{t_b - 1},$ \\$ \{r_{i, a_i}\}_{i=1}^{t_b - 1}, \btheta_0)$
    \STATE 
    $f_b(\cdot)\leftarrow f(\cdot; \btheta_b) + \beta_{t_b}\sqrt{\gb(\cdot; \btheta_b)\Zb_{t_b}^{-1}\gb(\cdot; \btheta_b)/m}$, \alglinelabel{alg:line1}
    \ENDIF 
    \STATE Receive $\{\xb_{t,a}\}_{a = 1}^K$
    \STATE Select $a_t\leftarrow \argmax_{a \in [K]}  f_b(\xb_{t,a})$ \alglinelabel{alg:actionselection}
    \STATE Set $\Zb_{t+1} \leftarrow \Zb_t + \gb(\xb_{t,a_t}; \btheta_b)^\top\gb(\xb_{t,a_t}; \btheta_b)/m$\alglinelabel{alg:line2}
        \ENDFOR
        \STATE $t_{b+1} = T+1$
	\end{algorithmic}
\end{algorithm}

\begin{algorithm}[th]
	\caption{$\gdname$} \label{algorithm:GD} 
	\begin{algorithmic}[1]
	\REQUIRE Regularization parameter $\lambda$, step size $\eta$, number of gradient descent steps $J$, network width $m$, actions $\{\xb_t\}$, rewards $\{r_t\}$, initial parameter $\btheta^{(0)}$.
    \STATE Define $\cL (\btheta) = \sum_{i=1}^t ( f(\xb_i; \btheta) - r_i)^2/2 + m\lambda\|\btheta - \btheta^{(0)}\|_2^2/2$.
    \FOR{$j = 0, \dots, J-1$}
    \STATE $\btheta^{(j+1)} \leftarrow \btheta^{(j)} - \eta\nabla \cL(\btheta^{(j)})$
    \ENDFOR 
    \ENSURE $\btheta^{(J)}$.
	\end{algorithmic}
\end{algorithm}

In particular, $\algname$ selects the simple uniform batch grid, with $t_b = b\cdot \lfloor T/B\rfloor +1$, as suggested in Eq. \eqref{alg:fix}. It is easy to see that when $B = T$, $\algname$ updates the network parameters at each round, reducing to NeuralUCB. \citep{han2020sequential} also studied the fixed batch size scheme, but for the linear reward.

\subsection{Adaptive Batch Size Scheme}\label{sec:adaptive-batch}
Unlike the fixed batch size scheme, in the adaptive batch size scheme, $\algname$ does not predefine the batch grid. Instead, it dynamically selects the batch grids based on the previous observations. Specifically, at any time $t$, the algorithm calculates the determinant of the covariance matrix and keeps track of its ratio to the determinant of the covariance matrix calculated at the end of the previous batch. If this ratio is larger than a hyperparameter $q$ and the number of utilized batches is less than the budget $B$, then $\algname$ starts a new batch. This idea used in the adaptive batch size scheme is similar to the \emph{rarely switching} updating rule introduced in \citet{abbasi2011improved} for linear bandits. The difference is that while \citet{abbasi2011improved} applies this idea directly to the contexts $\{\xb^i\}_i$, Algorithm \ref{algorithm:3} applies it to the gradient mapping of contexts.

\section{Main Results}\label{sec:theory}
In this section, we propose our main theoretical results about Algorithm~\ref{algorithm:3}. First, we need the definition of the neural tangent kernel (NTK) matrix \citep{jacot2018neural}. 
\begin{definition}\label{def:ntk}
Let $\{\xb^i\}_{i=1}^{TK}$ be a set of contexts.  Define
\begin{align*}
    \tilde \Hb_{i,j}^{(1)} &= \bSigma_{i,j}^{(1)} = \la \xb^i, \xb^j\ra, ~~~~~~~~  \Ab_{i,j}^{(l)} = 
    \begin{pmatrix}
    \bSigma_{i,i}^{(l)} & \bSigma_{i,j}^{(l)} \\
    \bSigma_{i,j}^{(l)} & \bSigma_{j,j}^{(l)} 
    \end{pmatrix},\notag \\
    \bSigma_{i,j}^{(l+1)} &= 2\EE_{(u, v)\sim N(\zero, \Ab_{i,j}^{(l)})} \left[\sigma(u)\sigma(v)\right],\notag \\
    \tilde \Hb_{i,j}^{(l+1)} &= 2\tilde \Hb_{i,j}^{(l)}\EE_{(u, v)\sim N(\zero, \Ab_{i,j}^{(l)})} \left[\sigma'(u)\sigma'(v)\right] + \bSigma_{i,j}^{(l+1)}.\notag
\end{align*}
Then, $\Hb = (\tilde \Hb^{(L)} + \bSigma^{(L)})/2$ is called the \emph{neural tangent kernel (NTK)} matrix on the context set $\{\xb^i\}_i$. For simplicity, let $\hb \in \RR^{TK}$ denote the vector $(h(\xb^i))_{i=1}^{TK}$. 
\end{definition}

We need the following assumption over the NTK gram matrix $\Hb$. 
\begin{assumption}\label{assumption:input}
The NTK matrix satisfies $\Hb \succeq \lambda_0\Ib$. 
\end{assumption}
\begin{remark}
Assumption \ref{assumption:input} suggests that the NTK matrix $\Hb$ is non-singular. Such a requirement can be guaranteed as long as \emph{no} two contexts in $\{\xb^i\}_i$ are parallel \citep{du2018gradient}.
\end{remark}
 We also need the following assumption over the initialized parameter $\btheta_0$ and the contexts $\xb^i$. 
\begin{assumption}\label{assumption:context}
For any $1 \leq i \leq TK$, the context $\xb^i$ satisfies $\|\xb^i\|_2 = 1$ and $[\xb^i]_j =[\xb^i]_{j+d/2}$. Meanwhile, the initial parameter $\btheta_0 = [\text{vec}(\Wb_1)^\top,\dots,\text{vec}(\Wb_L)^\top]^\top$ is initialized as follows: for $1 \leq l\leq L-1$, $\Wb_l$ is set to $\begin{pmatrix}
    \Wb & \zero \\
    \zero & \Wb
    \end{pmatrix}$,
where each entry of $\Wb$ is generated independently from $N(0, 4/m)$; $\Wb_L$ is set to $(\wb^\top, -\wb^\top)$, where each entry of $\wb$ is generated independently from $N(0, 2/m)$.
\end{assumption}
\begin{remark}
Assumption \ref{assumption:context} suggests that the context $\xb^i$ and the initial parameter $\btheta_0$ should be `symmetric' considering each coordinate. It can be verified that under such an assumption, for any $i \in [TK]$ we have $f(\xb^i; \btheta_0) = 0$, which is crucial to our analysis. Meanwhile, for any context $\xb$ that does not satisfy the assumption, we can always construct a satisfying new context $\xb'$ by setting $\xb' = [\xb^\top, \xb^\top]^\top/\sqrt{2}$.
\end{remark}
We also need the following definition of the effective dimension. 
\begin{definition}\label{def:effective_dimension}
The effective dimension $\tilde d$ of the neural tangent kernel matrix on contexts $\{\xb^i\}_{i=1}^{TK}$ is defined as
\begin{align}
    \tilde d = \frac{\log \det (\Ib + \Hb/\lambda)}{\log (1+TK/\lambda)}.\notag
\end{align}
\end{definition}
\begin{remark}
The notion of effective dimension $\tilde d$ is similar to the \emph{information gain} introduced in \citet{srinivas2009gaussian} and \emph{effective dimension} introduced in  \citet{valko2013finite}. Intuitively, $\tilde d$ measures how quickly the eigenvalues of $\Hb$ diminish, and it will be upper bounded by the dimension of the RKHS space spanned by $\Hb$ \citep{zhou2020neural}. 
\end{remark}

The following two theorems characterize the regret bounds of $\algname$ under two different update schemes. We first show the regret bound of $\algname$ under the fixed batch size update scheme. 
\begin{theorem}\label{thm:batch}
Suppose Assumptions \ref{assumption:input} and \ref{assumption:context} hold. Setting $m = \text{poly}(T,L,K,\lambda^{-1}, \lambda_0^{-1},  S^{-1}, \log(1/\delta))$ and $\lambda \geq  S^{-2}$, where $S$ is a parameter satisfying $S \geq \sqrt{2\hb^\top\Hb^{-1}\hb}$. There exist positive constants $C_1, C_2, C_3$ such that, if 
\begin{align}
    \beta_t &= C_1\bigg[\bigg(\nu\sqrt{\log \frac{\det\Zb_t}{\det \lambda \Ib} - 2\log \delta}+\sqrt{\lambda}S\bigg) \notag \\
    &+ (\lambda+tL)(1-\eta m\lambda)^{J/2}\sqrt{t/\lambda}\bigg],\notag
\end{align}
$J = 2\log(\lambda S/(\sqrt{T}\lambda + C_2 T^{3/2}L))TL/\lambda$, $\eta = C_3 (mTL+m\lambda)^{-1}$, then with probability at least $1-\delta$, the regret of Algorithm \ref{algorithm:3} with fixed batch size scheme is bounded as follows:
\begin{align}
    R_T= \tilde O\bigg(\Big(\nu\tilde d + \sqrt{\lambda \tilde d}S\Big)\sqrt{T} + \tilde d T/B\bigg).\notag
\end{align}
\end{theorem}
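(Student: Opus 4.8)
The plan is to follow the optimism-based template of NeuralUCB and isolate the new batched structure into a single elliptical-potential argument. First I would establish the uniform confidence guarantee: with the prescribed over-parameterization $m=\text{poly}(\cdot)$, step size $\eta$, and number of gradient steps $J$, on an event of probability at least $1-\delta$, for every batch $b\in[B]$ and every context $\xb$,
\[
|f(\xb;\btheta_b)-h(\xb)|\le \beta_{t_b}\sqrt{\gb(\xb;\btheta_b)^\top\Zb_{t_b}^{-1}\gb(\xb;\btheta_b)/m}.
\]
This is where most of the work lies, and it is inherited from the NeuralUCB analysis: (i) linearize $f(\xb;\btheta)$ around $\btheta_0$ and control $\gb(\xb;\btheta_b)\approx\gb(\xb;\btheta_0)$ in the NTK regime; (ii) use $S\ge\sqrt{2\hb^\top\Hb^{-1}\hb}$ to exhibit a bounded-norm parameter whose linear predictor matches $\hb$ up to $o(1)$ error; (iii) apply the self-normalized martingale concentration to the $\nu$-sub-Gaussian noise, supplying the $\nu\sqrt{\log(\det\Zb_{t_b}/\det\lambda\Ib)-2\log\delta}+\sqrt\lambda S$ part of $\beta_{t_b}$; and (iv) bound the residual optimization/linearization error by $(\lambda+tL)(1-\eta m\lambda)^{J/2}\sqrt{t/\lambda}$, the second bracket of $\beta_{t_b}$, which the choice of $J$ and $m$ renders negligible. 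I would also record the standard determinant bound $D:=\log(\det\Zb_{T+1}/\det\lambda\Ib)\le \tilde d\log(1+TK/\lambda)+1=\tilde O(\tilde d)$, obtained by comparing the empirical gradient covariance with $\Hb$.

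Given the confidence guarantee, optimism is immediate: since $f_b(\xb_{t,a_t})\ge f_b(\xb_{t,a_t^\ast})\ge h(\xb_{t,a_t^\ast})$, on the good event the per-round regret obeys
\[
h(\xb_{t,a_t^\ast})-h(\xb_{t,a_t})\le 2\beta_{t_b}\sqrt{\gb(\xb_{t,a_t};\btheta_b)^\top\Zb_{t_b}^{-1}\gb(\xb_{t,a_t};\btheta_b)/m},
\]
while it is trivially at most $1$ since $h\in[0,1]$. Writing $\gb_t:=\gb(\xb_{t,a_t};\btheta_b)$, the crux and only genuinely batched step is to sum $\|\gb_t\|_{\Zb_{t_b}^{-1}}$ when the preconditioner $\Zb_{t_b}$ is \emph{frozen} throughout batch $b$ rather than updated each round. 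I would split the batches according to determinant growth: call batch $b$ \emph{stable} if $\det\Zb_{t_{b+1}}\le 2\det\Zb_{t_b}$ and \emph{volatile} otherwise. Because $\sum_b\log(\det\Zb_{t_{b+1}}/\det\Zb_{t_b})=D$, the number of volatile batches is at most $D/\log 2=\tilde O(\tilde d)$, and each contains at most $\lfloor T/B\rfloor$ rounds; bounding every such round's regret by the trivial $1$ gives a contribution of $\tilde O(\tilde d\,T/B)$, which is exactly the batch penalty.

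For stable batches, monotonicity of the determinant gives $\det\Zb_t\le 2\det\Zb_{t_b}$ for every $t$ in the batch, so since $\Zb_{t_b}\preceq\Zb_t$ the ratio inequality yields $\|\gb_t\|_{\Zb_{t_b}^{-1}}^2\le(\det\Zb_t/\det\Zb_{t_b})\|\gb_t\|_{\Zb_t^{-1}}^2\le 2\|\gb_t\|_{\Zb_t^{-1}}^2$. Summing over all stable rounds, Cauchy--Schwarz followed by the elliptical potential lemma $\sum_t\|\gb_t\|_{\Zb_t^{-1}}^2/m\le 2D$ gives
\[
\sum_{\text{stable}}2\beta_{t_b}\|\gb_t\|_{\Zb_{t_b}^{-1}}/\sqrt m\le 2\beta_{t_b}\sqrt{T}\sqrt{4D}=\tilde O\big((\nu\tilde d+\sqrt{\lambda\tilde d}\,S)\sqrt T\big),
\]
using $\beta_{t_b}=\tilde O(\nu\sqrt{\tilde d}+\sqrt\lambda S)$ and $D=\tilde O(\tilde d)$. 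Adding the volatile and stable contributions and converting the high-probability bound into the expected pseudo-regret (the probability-$\delta$ failure event costs at most $\delta T=\tilde O(1)$ for a suitable $\delta$) yields $R_T=\tilde O((\nu\tilde d+\sqrt{\lambda\tilde d}S)\sqrt T+\tilde d\,T/B)$. I expect Step~1 — the uniform-over-batches confidence guarantee together with the control of gradient features under finite-width training — to be the main obstacle, since the determinant ratio within a batch is otherwise uncontrolled and a naive factoring would blow up exponentially; the stable/volatile dichotomy in Steps~3--4 is the clean new ingredient and should go through as sketched.
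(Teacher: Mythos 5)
Your proposal is correct and follows essentially the same route as the paper: the confidence bound and per-round regret inequality are imported from the NeuralUCB analysis (Lemma \ref{lemma:adaptiveboundh}), and your stable/volatile batch dichotomy is exactly the paper's decomposition via the set $\cC=\{b:\det(\Zb_{t_{b+1}})/\det(\Zb_{t_b})>2\}$, with the same determinant-ratio lemma, Cauchy--Schwarz, elliptical potential lemma, and the bound $|\cC|\le\log_2(\det\Zb_{T+1}/\det\lambda\Ib)=\tilde O(\tilde d)$. No gaps.
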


\begin{remark}\label{remark:1}
Suppose $h$ belongs to the RKHS space of NTK kernel $\cH$ with a finite RKHS norm $\|h\|_{\cH}$, then $\sqrt{\hb^\top\Hb^{-1}\hb} \leq \|h\|_{\cH}$ (Appendix A.2, \citealt{zhou2020neural}). Therefore, by treating $\nu$ as a constant and setting $S = \sqrt{2}\|h\|_{\cH}$, $\lambda  =  S^{-2}$, the regret is on the order $\tilde O(\tilde d\sqrt{T} + \tilde dT/B)$.
This suggests setting $B = \sqrt{T}$ in order to obtain the standard regret $\tilde O(\tilde d\sqrt{T})$. 
\end{remark}
\begin{remark}
\citet{han2020sequential} proposed a lower bound on the regret of $2$-armed linear bandits with $d$-dimensional contexts, which suggests that for any algorithm with a fixed $B$-batch size scheme, the regret is no less than
\begin{align}
    \Omega(\sqrt{dT} + \min\{T\sqrt{d}/B, T/\sqrt{B}\}).\label{lower}
\end{align}
Eq. \eqref{lower} shows that to obtain an $\tilde O(\sqrt{T})$-regret, at least $\Omega(\sqrt{T})$ number of batches are needed, which implies that our choice of $B$ as $\sqrt{T}$ is tight.  
\end{remark}
We have the following theorem for Algorithm \ref{algorithm:3} under the adaptive batch size scheme.

\begin{theorem}\label{thm:rare}
Suppose Assumptions \ref{assumption:input} and \ref{assumption:context} hold. Let $S,\lambda, J, \eta, \{\beta_t\}$ be selected as in Theorem \ref{thm:batch}. Then with probability at least $1-\delta$, the regret of Algorithm \ref{algorithm:3} with the adaptive batch size scheme can be bounded by
\begin{align}
    R_T 
    & = \tilde O\bigg(\sqrt{\max\{q, (1+TK/\lambda)^{\tilde d}/q^B\}}\Big(\nu\tilde d + \sqrt{\lambda\tilde d}S\Big)\sqrt{T}  \bigg).\notag
\end{align}

\end{theorem}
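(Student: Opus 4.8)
The plan is to reuse the optimism machinery and concentration guarantee already behind Theorem~\ref{thm:batch}, and to add exactly one new ingredient: a determinant-ratio bound controlling the price of querying the confidence width at the stale matrix $\Zb_{t_b}$ rather than at the current matrix $\Zb_t$. First I would invoke the self-normalized confidence guarantee underlying the fixed-scheme analysis. Because that bound holds uniformly over all rounds (an anytime bound on a martingale in the noise $\xi_t$), it is insensitive to whether the grid $\mathcal{T}$ is deterministic or chosen by the data-dependent rule in Line~\ref{alg:ada}; thus with probability at least $1-\delta$, for every batch $b$ and context $\xb$, $|f(\xb;\btheta_b)-h(\xb)| \le \beta_{t_b}\sqrt{\gb(\xb;\btheta_b)^\top\Zb_{t_b}^{-1}\gb(\xb;\btheta_b)/m}$. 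Coupling this two-sided estimate with the greedy rule $a_t=\argmax_a f_b(\xb_{t,a})$ and the optimism $h(\xb_{t,a_t^*})\le f_b(\xb_{t,a_t^*})\le f_b(\xb_{t,a_t})$ gives the per-round bound $h(\xb_{t,a_t^*})-h(\xb_{t,a_t}) \le 2\beta_{t_b}\|\gb(\xb_{t,a_t};\btheta_b)/\sqrt m\|_{\Zb_{t_b}^{-1}}$, exactly as in the sequential case.

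The crux is to trade $\Zb_{t_b}^{-1}$ for $\Zb_t^{-1}$. Since $\Zb_t\succeq\Zb_{t_b}$ throughout batch $b$, I would apply the determinant-ratio norm comparison from the rarely-switching analysis of \citet{abbasi2011improved}: for any $\xb$, $\|\xb\|_{\Zb_{t_b}^{-1}}^2 \le (\det\Zb_t/\det\Zb_{t_b})\,\|\xb\|_{\Zb_t^{-1}}^2$. The entire argument then reduces to bounding $\det\Zb_t/\det\Zb_{t_b}$ uniformly over $t$ in each batch. For every batch except the last, the switching rule guarantees by construction that $\det\Zb_t\le q\,\det\Zb_{t_b}$ at every interior round, so the width inflates by at most $\sqrt q$. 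For the final batch the guard $b\le B-2$ disables the rule, so I would bound its ratio globally: monotonicity of $\det\Zb_t$ with the effective-dimension estimate yields $\det\Zb_{T+1}/\det(\lambda\Ib)\le(1+TK/\lambda)^{\tilde d}$, while the $q$-triggered jumps at the preceding batch starts multiply the determinant by more than $q$ each, giving $\det\Zb_{t_{\mathrm{last}}}/\det(\lambda\Ib)\ge q^{\,B-O(1)}$; dividing bounds the final-batch ratio by $(1+TK/\lambda)^{\tilde d}/q^{B}$ up to constants in the exponent. If instead the budget is never exhausted, the rule stays active to the end and every ratio is at most $q$, so taking the worse of the two cases always yields the inflation factor $\sqrt{\max\{q,(1+TK/\lambda)^{\tilde d}/q^{B}\}}$.

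With all widths expressed in the current metric $\Zb_t^{-1}$, I would conclude as in Theorem~\ref{thm:batch}: factor out the worst-case inflation and the monotone bound $\beta_{t_b}\le\beta_T=\tilde O(\nu\sqrt{\tilde d}+\sqrt\lambda S)$, then apply Cauchy--Schwarz followed by the elliptical-potential (log-determinant) bound $\sum_{t=1}^T \|\gb(\xb_{t,a_t};\btheta_b)/\sqrt m\|_{\Zb_t^{-1}}^2 = \tilde O(\tilde d)$. This last step is where the wide-network guarantee enters, since it certifies that $\gb(\cdot;\btheta_b)$ stays uniformly close to $\gb(\cdot;\btheta_0)$ so that the accumulated gradients behave like genuine fixed NTK features and the log-determinant is governed by $\tilde d$. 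The resulting sum is $\tilde O(\sqrt{\tilde d\,T})$, and multiplying by $2\beta_T$ and the inflation factor delivers the claimed regret $\tilde O\big(\sqrt{\max\{q,(1+TK/\lambda)^{\tilde d}/q^{B}\}}\,(\nu\tilde d+\sqrt{\lambda\tilde d}\,S)\sqrt T\big)$.

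The step I expect to be the main obstacle is the final-batch determinant accounting: making precise the lower bound $\det\Zb_{t_{\mathrm{last}}}\ge c\,q^{B}\det(\lambda\Ib)$ requires arguing that each of the roughly $B$ adaptive switches multiplies the determinant by strictly more than $q$, carefully tracking the off-by-one in the batch count induced by the $b\le B-2$ guard, and --- crucially --- carrying out all of these determinant manipulations on the \emph{empirical} gradient Gram matrices $\Zb_t$ rather than on the idealized NTK matrix $\Hb$. Ensuring the latter is legitimate demands that the perturbation bounds relating $\gb(\xb;\btheta_b)$ to $\gb(\xb;\btheta_0)$ for wide $m$ hold uniformly across all batches and contexts simultaneously, which is the part that ties the batched determinant bookkeeping back to the neural-network approximation.
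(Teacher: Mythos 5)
Your proposal is correct and follows essentially the same route as the paper's proof: the per-round optimistic bound in the stale metric $\Zb_{t_b}^{-1}$, the determinant-ratio comparison of Lemma \ref{lemma:det} to pass to $\Zb_t^{-1}$ at a cost of $\sqrt{q}$ on non-final batches, the telescoping product $\det(\Zb_{t_B})/\det(\lambda\Ib)\ge q^{B}$ to bound the final batch's inflation by $\det(\Zb_T)/(q^{B}\det(\lambda\Ib))$, and then Cauchy--Schwarz plus the elliptical-potential lemma with $\log\det\Zb_{T+1}/\det(\lambda\Ib)=\tilde O(\tilde d)$. The case split you describe (budget exhausted vs.\ not) and the resulting $\sqrt{\max\{q,(1+TK/\lambda)^{\tilde d}/q^{B}\}}$ factor are exactly the paper's Eqs.~\eqref{eq:finalregret_1}--\eqref{eq:finalregret}.
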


\begin{remark}
By treating $\nu$ as a constant and assuming that $h$ belongs to the RKHS space of NTK kernel $\cH$ with a finite RKHS norm $\|h\|_{\cH}$, and by setting $S$ and $\lambda$ as Remark \ref{remark:1} suggests, the regret is bounded by $\tilde O(\sqrt{\max\{q, (1+TK)^{\tilde d}/q^B\}}\tilde d\sqrt{T})$. 
\end{remark}
\begin{remark}
To achieve an $\tilde O(\tilde d\sqrt{T})$ regret, here $B$ needs to be chosen as $\Omega(\tilde d \log(1+TK/\lambda))$ and $q = \tilde\Theta((1+TK/\lambda)^{\tilde d/B})$. As a comparison, for the linear bandits case, \citet{ruan2020linear} has shown that an $O(d \log d \log T)$ number of batches is necessary to achieve a $\tilde O(\sqrt{T})$ regret. Therefore, our choice of $B$ as $\tilde d \log(T)$ is tight up to a $\log \tilde d$ factor.
\end{remark}

\section{Numerical Experiments}
\label{sec:sim}

\begin{figure*}[t]
    \centering
    \includegraphics[width=0.9\textwidth]{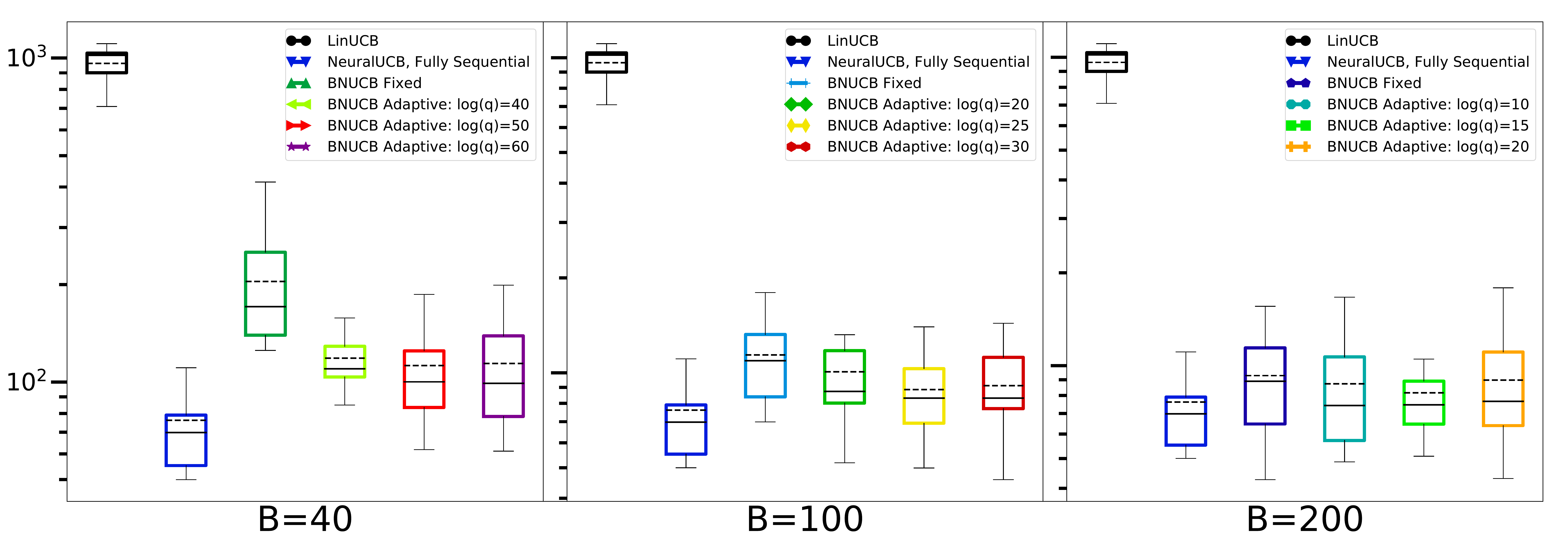}
    \caption{Distribution of per-instance regret on Synthetic data. The solid and dashed lines indicate the median and the mean respectively. Note that BNUCB stands for $\algname$ and also that the regrets are plotted on the log scale.} \label{fig:cosine}
\end{figure*}

In this section, we run numerical experiments to validate our theoretical findings. In what follows we consider both real and synthetically generated data. Due to space limitations, we defer the discussion of hyperparameter tuning of algorithms and also additional simulations to Appendix \ref{app:additional_sim}.

\subsection{Synthetic Data}
\label{sec:sim-synth}

We compare the performance of our proposed $\algname$ (BNUCB) algorithm with fixed and adaptive size batches for several values of $B$ and $q$ with two fully sequential benchmarks on synthetic data generated as follows. Let $T=2000, d=10, K=4$, and consider the cosine reward function given by $r_{t,a} = \cos(3 \xb_{t,a}^\top \btheta^*) + \xi_t$ where $\{\xb_{t,1}, \xb_{t,2}, \cdots, \xb_{t,K}\}$ are contexts generated at time $t$ according to $U[0, 1]^d$ independent of each other. The parameter $\btheta^*$ is the unknown parameter of model that is generated according to $U[0, 1]^d$, normalized to satisfy $\| \btheta^* \|_2 = 1$. The noise $\xi_t$ is generated according to $N(0, 0.25)$ independent of all other variables. 

The fully sequential benchmarks considered are: (1) NeuralUCB algorithm \citep{zhou2020neural} and (2) LinUCB algorithm \citep{li2010contextual}. For BatchNeuralUCB and NeuralUCB algorithms, we consider two-layer neural networks with $m=200$ hidden layers. We report this process for $10$ times and generate the following plots:

$\bullet$ Box plot of the total regret of algorithms together with its standard deviation.

$\bullet$ Scatter plot of the total regret vs execution time for $5$ simulations randomly selected out of all $10$ simulations. 

\vspace{-5.6pt}

\paragraph{Results.} The results are depicted in Figures \ref{fig:cosine} and \ref{fig:cosine-time}. We can make the following observations. First, the regret of LinUCB is almost $10$ times worse than the fully sequential NeuralUCB, which is potentially due to the model misspecification. Our proposed BNUCB works pretty well in both fixed and adaptive schemes, while keeping the total number of policy updates and also the running time small. In fact, for all models, except the fixed batched setting with $B=40$, the regret of BNUCB is within a factor of two of its fully sequential counterpart. At the same time, the number of policy updates and execution times of all configurations of BNUCB for all pairs of $(B,q)$ are almost ten times smaller than the fully sequential version. Second, for a given batch size $B$, the adaptive batch scheme configurations have better performance compared to the fixed ones. 

\subsection{Real Data}
\label{sec:sim-real}
We repeat the above simulation this time using the Mushroom dataset from the UCI repository.\footnote{This dataset can be found here \href{https://archive.ics.uci.edu/ml/datasets/mushroom}{https://archive.ics.uci.edu/ml/datasets/mushroom}} The dataset is originally designed for classifying edible vs poisonous mushrooms. It contains $n=8124$ samples each with $d=22$ features, each belonging to one of $K=2$ classes. For each sample $s_t = (c_t, l_t)$ with context $c_t \in \RR^d$ and label $l_t \in [K]$, we consider the zero-one reward defined as $r_{t, a_t} = \ind\{a_t = l_t\}$ and generate our context vectors as $\{\xb_{t,a} \in \RR^{Kd}: \xb_{t,a} = [\underbrace{0, \cdots, 0}_{a-1 \text{~times}}, c_i, \underbrace{0, \cdots, 0}_{K-a \text{~times}}], a \in [K]\}$. 

We compare the performance of algorithms on similar metrics as those described in Section~\ref{sec:sim-real} over $10$ random Monte Carlo simulations and report the results. For each instance, we select $T=2000$ random samples without replacement from the dataset and run all algorithms on that instance. Note that in this simulation, both NeuralUCB and $\algname$ use two-layer neural networks with $m=100$ hidden layers. 

\begin{figure*}[hptb]
    \centering
    \includegraphics[width=0.9\textwidth]{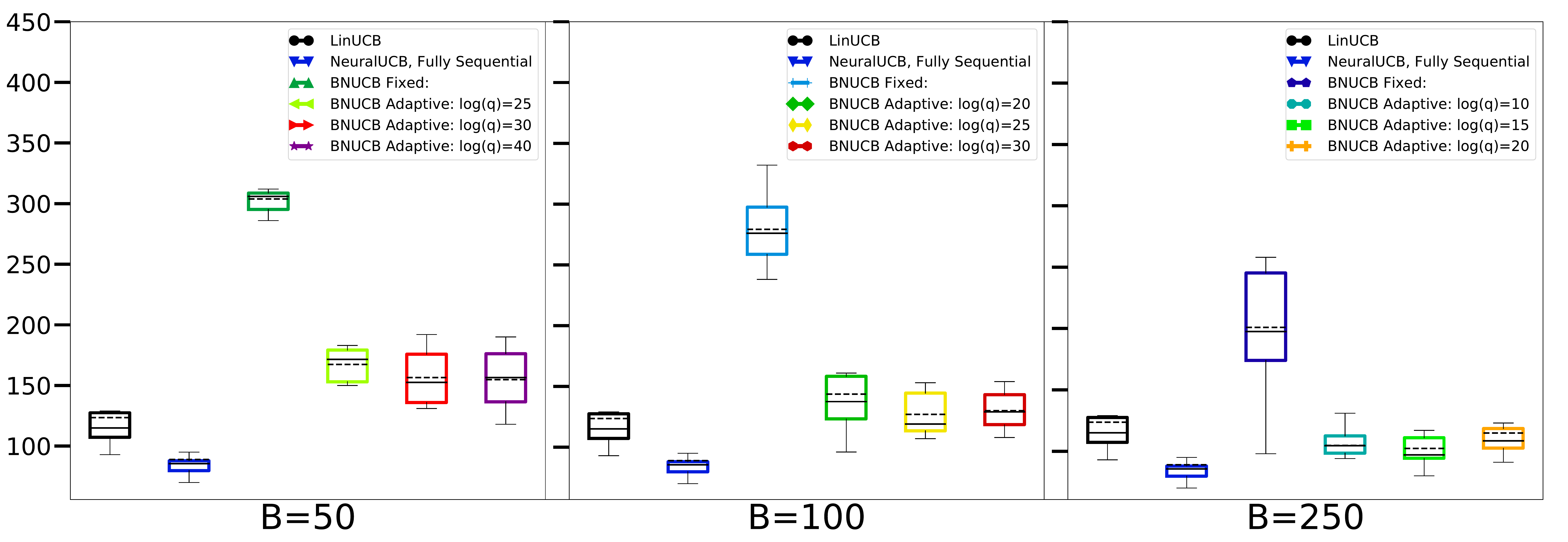}
    \caption{Distribution of per-instance regret on Mushroom dataset. The solid and dashed lines indicate the median and the mean respectively. Note that BNUCB stands for $\algname$.} \label{fig:mushroom}
\end{figure*}


\vspace{-7pt}

\paragraph{Results.} The results are depicted in Figures \ref{fig:mushroom} and \ref{fig:mushroom-time}. We can make the following observations. First, among fully sequential models, NeuralUCB outperforms LinUCB although it is $1000$ times slower. As the number of batches used in BNUCB increases the regret decreases and it gets closer to that of fully sequential NeuralUCB. For instance, in all adaptive batch scheme models with $B = 250$, the average regret is worse than that of the fully sequential NeuralUCB by only twenty percent, outperforming LinUCB. Furthermore, they keep the total number of policy changes limited and improve the running time of the fully sequential NeuralUCB by a factor of eight. Second, across these configurations, every adaptive batch model outperforms all fixed batch models. For example, the adaptive BNUCB with $B=40$ and $\log(q)=30$, outperforms BNUCB with $B=250$ fixed batches. This validates our theory that the minimum number of batches required for getting the optimal $\tilde{O}(\sqrt{T})$ regret is much smaller in the adaptive batch setting compared to the fixed batch setting (order of $\log T$ vs $\sqrt{T}$).


\begin{figure*}[t]
 \centering
  \subfigure[Synthetic dataset\label{fig:cosine-time}]{\includegraphics[width=0.44\textwidth]{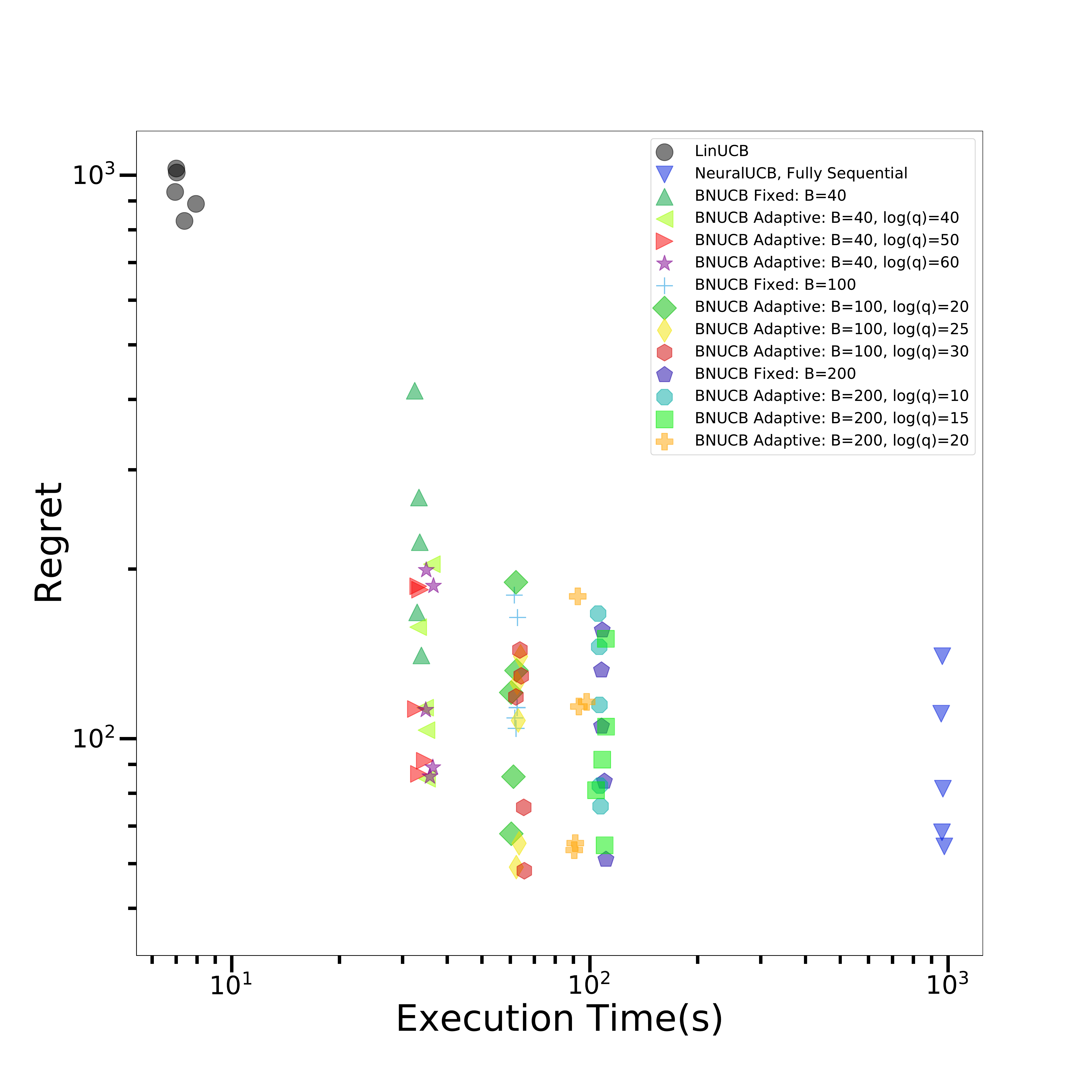}}
 \subfigure[Mushroom dataset\label{fig:mushroom-time}]{\includegraphics[width=0.44\textwidth]{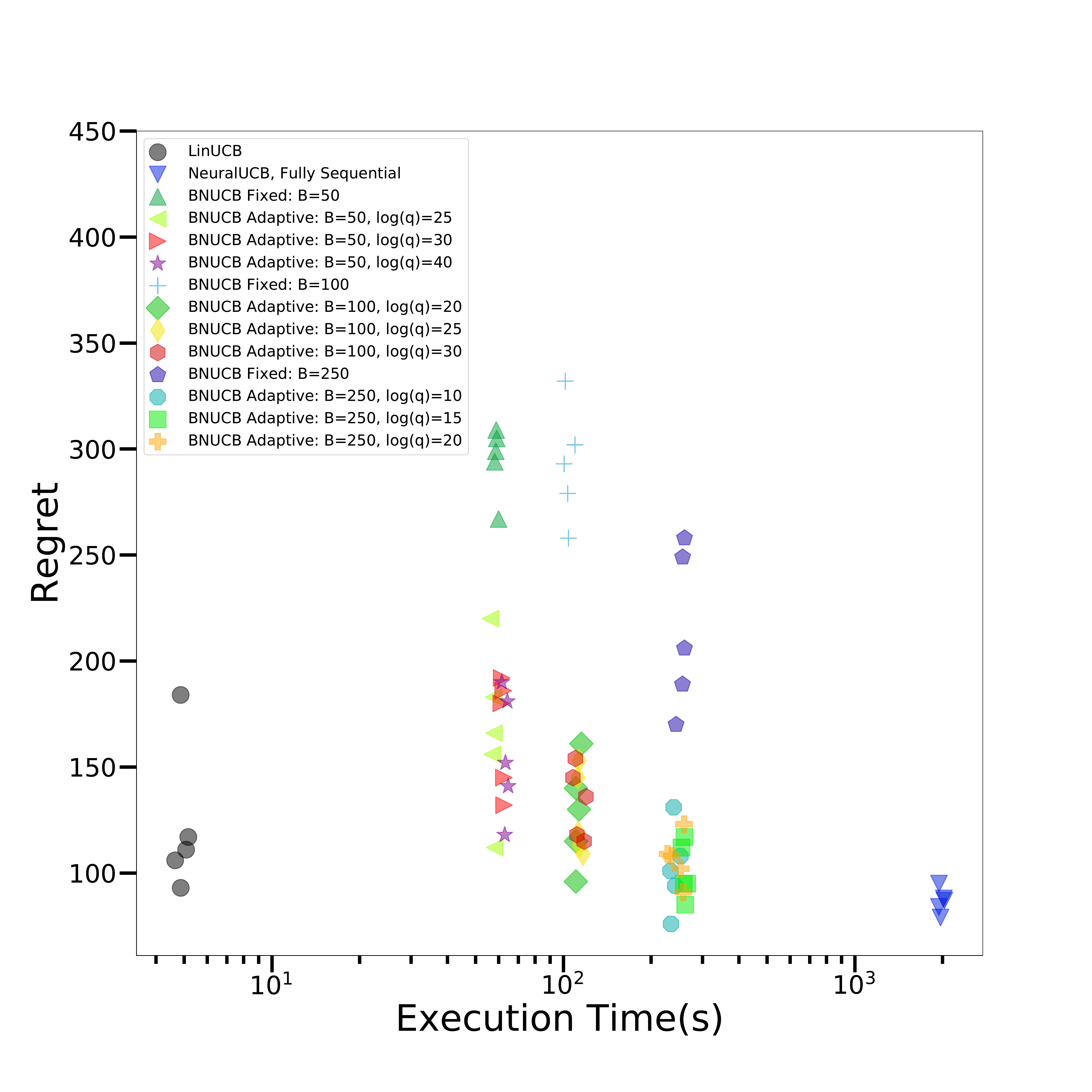}}
 \centering
   \caption{Scatter plot of regret vs execution time for $5$ instances selected at random on synthetic data (left) and Mushroom data (right).}\label{fig:simulations}
\end{figure*}

\section{Proof of the Main Results}\label{sec:proofs}
In this section, we prove Theorem \ref{thm:batch} and Theorem \ref{thm:rare}. 
\subsection{Proof of Theorem \ref{thm:batch}}
To prove Theorem \ref{thm:batch}, we need the following lemmas. The first lemma from \citet{zhou2020neural} suggests that at each time $t$ within the $b$-th batch, the difference between the reward of the optimal action $\xb_{t, a_t^*}$ and the selected action $\xb_{t, a_t}$ can be upper bounded by a bonus term defined based on the confidence radius $\beta_{t_b}$, the gradients $\gb(\xb_{t, a_t}; \btheta_b)$ and the covariance matrix $\Zb_{t_b}$. 
\begin{lemma}[Lemma 5.3, \citealt{zhou2020neural}]\label{lemma:adaptiveboundh}
Suppose Assumptions \ref{assumption:input} and \ref{assumption:context} hold. Let $a_t^* = \argmax_{a \in [K]}h(\xb_{t, a})$. There exist positive constants $\{\bar C_i\}_{i=1}^4$ such that for any $\delta \in (0,1)$, if $\eta \leq \bar C_1(mTL + m\lambda)^{-1}$ and $m = \text{poly}(T,L,K,\lambda^{-1}, \lambda_0^{-1},  S^{-1}, \log(1/\delta))$, 
 then with probability at least $1-\delta$, the following holds for all $0\leq b\leq B$ and $t_b \leq t <t_{b+1}$,
\begin{align}
    &h\big(\xb_{t,a_t^*}\big) - h\big(\xb_{t,a_t}\big)\leq 2\hat\beta_{t_b}\min\bigg\{\|\gb(\xb_{t,a_t}; \btheta_b)/\sqrt{m}\|_{\Zb_{t_b}^{-1}}, 1\bigg\} + m^{-1/6}\sqrt{\log m} \xi(T),\notag
\end{align}
where $\xi(T) = \bar C_3\big(S T^{7/6}\lambda^{-1/6}L^{7/2} +  T^{5/3}\lambda^{-2/3}L^3\big)$ and
 \begin{align}
        \hat\beta_t &=\bar C_4 \sqrt{1+m^{-1/6}\sqrt{\log m}L^4t^{7/6}\lambda^{-7/6}}  \cdot \bigg(\nu\sqrt{\log\frac{\det \Zb_t}{\det \lambda \Ib} + m^{-1/6}\sqrt{\log m} L^4t^{5/3}\lambda^{-1/6}-2\log\delta}\notag \\
        &\quad + \sqrt{\lambda}S\bigg) + (\lambda +  tL)\Big[(1- \eta m \lambda)^{J/2} \sqrt{t/\lambda}  + m^{-1/6}\sqrt{\log m}L^{7/2}t^{5/3}\lambda^{-5/3}(1+\sqrt{t/\lambda})\Big].\notag
    \end{align}
\end{lemma}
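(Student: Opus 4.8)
The plan is to reduce the overparameterized network to a nearly linear model in the tangent feature map $\gb(\cdot; \btheta_0)/\sqrt{m}$ and then combine a self-normalized confidence bound with the optimism of the action rule. First I would invoke the neural tangent kernel regime: when $m$ is polynomially large, the gradient $\gb(\xb; \btheta_b)$ computed along the trajectory of \gdname~stays uniformly close to its value $\gb(\xb; \btheta_0)$ at initialization, and the network output is well approximated by its first-order expansion $f(\xb; \btheta_b) \approx \la \gb(\xb; \btheta_0), \btheta_b - \btheta_0\ra$. Assumption \ref{assumption:context} ensures $f(\xb^i; \btheta_0) = 0$, so this expansion carries no constant term. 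Both approximations are controlled up to additive errors of order $m^{-1/6}\sqrt{\log m}$ times polynomial factors of $T, L, \lambda^{-1}$, which are precisely the quantities collected in $\xi(T)$ and in the lower-order pieces of $\hat\beta_t$.

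Next I would build the confidence set. Using Assumption \ref{assumption:input} ($\Hb \succeq \lambda_0\Ib$) together with $S \geq \sqrt{2\hb^\top\Hb^{-1}\hb}$, one constructs a surrogate parameter $\btheta^*$ with $\sqrt{m}\|\btheta^* - \btheta_0\|_2 = O(S)$ whose induced linear predictor matches $\hb$ on the observed contexts up to the linearization error above. Writing $\Zb_{t_b} = \lambda\Ib + \sum_{i < t_b}\gb(\xb_{i,a_i};\btheta_0)\gb(\xb_{i,a_i};\btheta_0)^\top/m$ (up to the gradient-stability error), the estimation error of the ridge predictor is bounded by a standard self-normalized martingale argument in the style of \citet{abbasi2011improved}, giving the leading term $\nu\sqrt{\log(\det\Zb_{t_b}/\det\lambda\Ib) - 2\log\delta} + \sqrt{\lambda}S$ of $\hat\beta_{t_b}$ after a union bound over batches. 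Since \gdname~runs only $J$ gradient steps rather than solving the regularized least squares exactly, I would separately bound the optimization gap between $\btheta_b$ and the exact minimizer of $\cL$; the linear convergence of gradient descent on the locally strongly convex loss yields the geometric factor $(1 - \eta m\lambda)^{J/2}\sqrt{t/\lambda}$. Assembling these pieces shows that, with probability $1-\delta$ and for every batch $b$, the reward $h(\xb)$ lies within $\hat\beta_{t_b}\|\gb(\xb;\btheta_b)/\sqrt{m}\|_{\Zb_{t_b}^{-1}} + m^{-1/6}\sqrt{\log m}\,\xi(T)$ of the prediction $f(\xb;\btheta_b)$, uniformly over all contexts.

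Finally I would close with optimism. Because the batch quantities $\btheta_b$ and $\Zb_{t_b}$ are frozen for all $t_b \le t < t_{b+1}$, the confidence bound applies verbatim to every round of the batch with the same radius $\hat\beta_{t_b}$. By confidence containment, $h(\xb_{t,a_t^*}) \le f(\xb_{t,a_t^*};\btheta_b) + \hat\beta_{t_b}\|\gb(\xb_{t,a_t^*};\btheta_b)/\sqrt{m}\|_{\Zb_{t_b}^{-1}} + m^{-1/6}\sqrt{\log m}\,\xi(T)$, and the action rule in Line \ref{alg:actionselection} forces the right-hand side to be maximized at $a_t$. Replacing $a_t^*$ by $a_t$ and then applying the lower confidence bound at $a_t$ produces the claimed factor of $2$ on the bonus evaluated at $\xb_{t,a_t}$, with the $\min\{\cdot,1\}$ arising from the trivial bound $h \le 1$. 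I expect the main obstacle to be the overparameterization analysis of the second paragraph: establishing \emph{uniform} gradient stability and linearization accuracy that hold simultaneously across all $B$ frozen batch parameters and all $TK$ contexts, since this dictates the polynomial width requirement on $m$ and the exact form of the $m^{-1/6}\sqrt{\log m}$ corrections. The batched structure itself adds little beyond taking the union bound over batch boundaries rather than individual rounds.
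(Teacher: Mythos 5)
The paper does not prove this lemma at all: it is imported verbatim from \citet{zhou2020neural} (their Lemma 5.3), and the only adaptation here is that the frozen batch quantities $\btheta_b$ and $\Zb_{t_b}$ replace the per-round ones. Your sketch correctly reconstructs the argument of that source --- NTK linearization with $m^{-1/6}\sqrt{\log m}$ corrections, a surrogate parameter of norm $O(S/\sqrt{m})$ built from $\Hb \succeq \lambda_0 \Ib$, the self-normalized confidence bound, the $(1-\eta m\lambda)^{J/2}$ optimization gap, and the standard optimism step --- and you correctly identify that the batching only requires the confidence set from time $t_b$ to remain valid throughout the batch, so there is nothing to add.
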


Next lemma bounds the log-determinant of covariance matrix $\Zb_{T+1}$ by the effective dimension $\tilde d$. 
\begin{lemma}\label{lemma:newzhou}
There exists a constant $\bar C >0$ such that if $m = \text{poly}(T,L,K,\lambda^{-1}, \lambda_0^{-1},  S^{-1}, \log(1/\delta))$, then with probability at least $1-\delta$, we have
\begin{align}
    \log\frac{\det \Zb_{T+1}}{\det \lambda \Ib} &\leq \tilde d\log(1+TK/\lambda) +1 + \bar C m^{-1/6}\sqrt{\log m} L^4T^{5/3}\lambda^{-1/6}.\notag
\end{align}
\end{lemma}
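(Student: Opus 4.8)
The plan is to bound the log-determinant ratio $\log \frac{\det \Zb_{T+1}}{\det \lambda \Ib}$ by relating the empirical covariance matrix $\Zb_{T+1}$, which is built from the network gradients $\gb(\xb^i; \btheta_b)/\sqrt{m}$ evaluated at the trained parameters $\btheta_b$, to the NTK Gram matrix $\Hb$ appearing in the definition of the effective dimension $\tilde d$. The key idea is that in the overparameterized regime, i.e., for $m = \text{poly}(T,L,K,\lambda^{-1}, \lambda_0^{-1}, S^{-1}, \log(1/\delta))$ sufficiently large, the normalized gradient features $\gb(\xb^i; \btheta_b)/\sqrt{m}$ stay close to their counterparts at initialization, and the resulting Gram matrix of these features approximates $\Hb$ up to a controlled error that scales as a negative power of $m$.

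First I would reduce the quantity $\log \frac{\det \Zb_{T+1}}{\det \lambda \Ib}$ to a statement about the finite-dimensional feature matrix. Writing $\Gb \in \RR^{p \times T}$ for the matrix whose columns are the normalized gradients $\gb(\xb_{t,a_t}; \btheta_b)/\sqrt{m}$, we have $\Zb_{T+1} = \lambda \Ib + \Gb\Gb^\top$, so that by the matrix determinant identity $\det(\lambda \Ib + \Gb\Gb^\top) = \det(\lambda \Ib + \Gb^\top\Gb)$, and hence $\log \frac{\det \Zb_{T+1}}{\det \lambda \Ib} = \log \det(\Ib + \Gb^\top\Gb/\lambda)$, reducing everything to the $T \times T$ Gram matrix $\Gb^\top\Gb$ of the selected-action features. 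Since the selected contexts form a subset of the full context collection $\{\xb^i\}_{i=1}^{TK}$, this Gram matrix is a principal submatrix of the full $TK \times TK$ feature Gram matrix, and monotonicity of log-determinant under passing to principal submatrices lets me upper bound by the full matrix.

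Next I would invoke the gradient-approximation machinery from \citet{zhou2020neural} (the same overparameterization lemmas underlying Lemma \ref{lemma:adaptiveboundh}), which guarantees that $\big|\la \gb(\xb^i;\btheta_b), \gb(\xb^j;\btheta_b)\ra/m - \Hb_{i,j}\big|$ is bounded by an error term of order $m^{-1/6}\sqrt{\log m}$ times a polynomial in $T, L, \lambda^{-1}$. Propagating this entrywise perturbation into the log-determinant via a first-order expansion of $\log\det$, and using that $\log\det(\Ib + \Hb/\lambda)$ is exactly $\tilde d \log(1+TK/\lambda)$ by Definition \ref{def:effective_dimension}, yields the main term $\tilde d \log(1+TK/\lambda)$ plus the perturbation term $\bar C m^{-1/6}\sqrt{\log m} L^4 T^{5/3}\lambda^{-1/6}$. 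The stray additive constant $+1$ in the statement accommodates the slack from comparing the selected-action Gram matrix to the full NTK matrix and from the rounding inherent in the effective-dimension definition.

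The main obstacle I anticipate is making the perturbation analysis quantitatively tight enough to match the stated error exponents. Controlling how the entrywise error in the Gram matrix inflates the log-determinant is delicate because $\log\det$ is only Lipschitz away from singular matrices; I would need the regularization $\lambda \Ib$ to provide a lower bound on the smallest eigenvalue so that the trace-based perturbation bound $\log\det(M + E) - \log\det(M) \le \text{tr}(M^{-1}E)$ stays controlled, and then track the accumulation of the $m^{-1/6}$-scale errors across all $T$ rounds without letting the polynomial-in-$T$ factors blow up. This is exactly the kind of careful bookkeeping that the polynomial choice of $m$ is designed to absorb, so the technical burden lies in verifying that the specified width $m$ is large enough to keep every error term negligible, rather than in any conceptually new step.
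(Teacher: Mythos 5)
Your proposal is correct and follows essentially the same route as the paper: the paper's proof is just a pointer to Eqs.\ (B.16) and (B.19) of \citet{zhou2020neural}, and those equations establish exactly the chain you describe (reduce $\log\det\Zb_{T+1}/\det\lambda\Ib$ to the $T\times T$ Gram matrix of normalized gradients via the determinant identity, compare it to a principal submatrix of the NTK matrix $\Hb$ using the finite-width gradient approximation bounds, and absorb the perturbation into the $+1$ and the $m^{-1/6}\sqrt{\log m}$ terms via concavity of $\log\det$). The only cosmetic slips are that $\det(\lambda\Ib+\Gb\Gb^\top)$ and $\det(\lambda\Ib+\Gb^\top\Gb)$ differ by a factor $\lambda^{p-T}$ (though the ratio you actually use is correct), and the $+1$ is really the budget for the initialization-Gram-versus-$\Hb$ error made $\le 1$ by the choice of $m$, not for the principal-submatrix step, which is a clean inequality.
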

\begin{proof}
By Eqs. (B. 16) and (B. 19) in \citet{zhou2020neural}, we can obtain our statement when $m$ is large enough.  
\end{proof}


\begin{lemma}[Lemma 11, \citealt{abbasi2011improved}]\label{lemma:sumcontext}
For any $\{\xb_t\}_{t=1}^T \subset \RR^d$ that satisfies $\|\xb_t\|_2 \leq L$, let $\Ab_0 = \lambda \Ib$ and $\Ab_t = \Ab_0 + \sum_{i=1}^{t-1}\xb_i\xb_i^\top$, then we have
\begin{align}
    \sum_{t=1}^T \min\{1, \|\xb_t\|_{\Ab_{t}^{-1}}\}^2 \leq 2\log\frac{\det \Ab_{T+1}}{\det \lambda \Ib}.\notag
\end{align}
\end{lemma}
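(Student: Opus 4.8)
The plan is to reduce the whole sum to a telescoping product of determinants via the rank-one update formula, and then control each summand by an elementary logarithmic inequality. This is the classical elliptical-potential argument, so I expect no genuine obstacle; the only thing to be careful about is the two elementary inequalities, and in fact the hypothesis $\|\xb_t\|_2 \le L$ will turn out to be unnecessary for this particular bound.

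First I would apply the matrix determinant lemma to the rank-one update $\Ab_{t+1} = \Ab_t + \xb_t\xb_t^\top$. Since $\Ab_t \succeq \lambda\Ib \succ 0$ is invertible, this gives $\det \Ab_{t+1} = \det\Ab_t\,(1 + \xb_t^\top \Ab_t^{-1}\xb_t) = \det\Ab_t\,(1 + \|\xb_t\|_{\Ab_t^{-1}}^2)$, and hence the key identity $1 + \|\xb_t\|_{\Ab_t^{-1}}^2 = \det\Ab_{t+1}/\det\Ab_t$. This is what converts a quadratic-form quantity into a determinant ratio.

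Next I would record two elementary facts. The first is the identity $\min\{1,a\}^2 = \min\{1,a^2\}$ valid for every $a \ge 0$ (check the cases $a \le 1$ and $a > 1$ separately), which lets me replace $\min\{1,\|\xb_t\|_{\Ab_t^{-1}}\}^2$ by $\min\{1,\|\xb_t\|_{\Ab_t^{-1}}^2\}$. The second is the inequality $\min\{1,u\} \le 2\log(1+u)$ for all $u \ge 0$: for $u \le 1$ the function $2\log(1+u) - u$ has derivative $2/(1+u) - 1 \ge 0$ and vanishes at $u = 0$, so it is nonnegative, while for $u > 1$ one has $2\log(1+u) > 2\log 2 > 1$. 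Applying this with $u = \|\xb_t\|_{\Ab_t^{-1}}^2$ and invoking the determinant identity yields, for each $t$,
\[
\min\{1,\|\xb_t\|_{\Ab_t^{-1}}\}^2 = \min\{1,\|\xb_t\|_{\Ab_t^{-1}}^2\} \le 2\log\!\big(1 + \|\xb_t\|_{\Ab_t^{-1}}^2\big) = 2\log\frac{\det\Ab_{t+1}}{\det\Ab_t}.
\]

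Finally I would sum over $t = 1,\dots,T$ and telescope, using that $\Ab_1 = \Ab_0 = \lambda\Ib$:
\[
\sum_{t=1}^T \min\{1,\|\xb_t\|_{\Ab_t^{-1}}\}^2 \le 2\sum_{t=1}^T \log\frac{\det\Ab_{t+1}}{\det\Ab_t} = 2\log\frac{\det\Ab_{T+1}}{\det\Ab_1} = 2\log\frac{\det\Ab_{T+1}}{\det\lambda\Ib},
\]
which is exactly the claimed bound. The argument is entirely self-contained; the norm bound $\|\xb_t\|_2 \le L$ in the hypothesis is only relevant if one subsequently wishes to upper bound $\det\Ab_{T+1}$ itself, which the statement does not require.
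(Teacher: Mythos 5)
Your proof is correct and is exactly the standard elliptical-potential argument from the cited source (Abbasi-Yadkori et al.\ 2011, Lemma 11); the paper itself gives no proof, only the citation. Your side remark is also right: the hypothesis $\|\xb_t\|_2 \le L$ is not needed for this inequality and only matters when one later bounds $\det \Ab_{T+1}$ itself.
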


\begin{lemma}[Lemma 12, \citealt{abbasi2011improved}]\label{lemma:det}
Suppose $\Ab, \Bb\in \RR^{d \times d}$ are two positive definite matrices satisfying $\Ab \succeq \Bb$, then for any $\xb \in \RR^d$, $\|\xb\|_{\Ab} \leq \|\xb\|_{\Bb}\cdot \sqrt{\det(\Ab)/\det(\Bb)}$.
\end{lemma}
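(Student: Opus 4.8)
The plan is to reduce the inequality to a statement about the generalized eigenvalues of the pair $(\Ab,\Bb)$. Squaring both sides, the claim is equivalent to
\begin{align}
    \frac{\xb^\top \Ab \xb}{\xb^\top \Bb \xb} \leq \frac{\det(\Ab)}{\det(\Bb)} \notag
\end{align}
for every $\xb \neq \zero$ (the case $\xb = \zero$ being trivial). First I would \emph{whiten by $\Bb$}: since $\Bb$ is positive definite it admits a symmetric positive-definite square root $\Bb^{1/2}$, so I set $\Mb := \Bb^{-1/2}\Ab\Bb^{-1/2}$ and $\yb := \Bb^{1/2}\xb$. A direct computation gives $\xb^\top \Ab \xb = \yb^\top \Mb \yb$ and $\xb^\top \Bb \xb = \yb^\top \yb$, which recasts the left-hand side as an ordinary Rayleigh quotient $\yb^\top \Mb \yb / \yb^\top \yb$ of the symmetric matrix $\Mb$.

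Next I would control this Rayleigh quotient by the largest eigenvalue of $\Mb$. Letting $\mu_1,\dots,\mu_d$ denote the eigenvalues of $\Mb$, we have $\yb^\top \Mb \yb / \yb^\top \yb \leq \max_i \mu_i$. The hypothesis $\Ab \succeq \Bb$ is, after congruence by $\Bb^{-1/2}$, exactly the statement that $\Mb \succeq \Ib$, so every $\mu_i \geq 1$. The final ingredient is the elementary observation that a maximum of numbers each at least $1$ is dominated by their product, $\max_i \mu_i \leq \prod_{i=1}^d \mu_i$, since each omitted factor is $\geq 1$. Because congruence preserves determinants, $\prod_i \mu_i = \det(\Mb) = \det(\Ab)/\det(\Bb)$, and chaining the three bounds establishes the squared claim; taking square roots finishes the argument.

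I do not expect a substantial obstacle here, as this is a routine linear-algebra fact. The only points requiring care are the two reductions, namely that $\Ab \succeq \Bb$ implies $\Mb \succeq \Ib$ and that $\det \Mb = \det\Ab/\det\Bb$, both of which are immediate consequences of the congruence through $\Bb^{1/2}$. The single genuinely load-bearing inequality is $\max_i \mu_i \le \prod_i \mu_i$ for $\mu_i \ge 1$; this is precisely where the assumption $\Ab \succeq \Bb$ is essential, since it forces all eigenvalues of $\Mb$ to exceed one (without this the bound simply fails).
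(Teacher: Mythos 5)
Your proof is correct. The paper does not prove this lemma at all---it imports it verbatim as Lemma 12 of \citet{abbasi2011improved}---and your argument (whitening by $\Bb^{1/2}$, bounding the Rayleigh quotient of $\Mb=\Bb^{-1/2}\Ab\Bb^{-1/2}$ by $\lambda_{\max}(\Mb)$, and then by $\prod_i\mu_i=\det(\Ab)/\det(\Bb)$ since $\Mb\succeq\Ib$ forces every $\mu_i\ge 1$) is precisely the standard proof given in that reference, with every reduction justified.
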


Now we begin our proof of Theorem \ref{thm:batch}. 
\begin{proof}[Proof of Theorem \ref{thm:batch}]
Define the set $\cC$ as follows:
\begin{align}
    \cC = \{b \in \{0,\cdots, B-1\}: \det(\Zb_{t_{b+1}})/\det(\Zb_{t_{b}})>2\}.\notag
\end{align}
Then we have for every $b \notin \cC$ and $t_{b} \leq t <t_{b+1}-1$, 
\begin{align}
    \frac{\det(\Zb_{t})}{\det(\Zb_{t_{b}})} \leq \frac{\det(\Zb_{t_{b+1}})}{\det(\Zb_{t_{b}})} \leq 2.\notag
\end{align}
Based on $\cC$, we decompose the regret as follows. 
\begin{align}
    R_T 
    &= \sum_{b \in \cC}\sum_{t = t_{b}}^{t_{b+1}-1}[h\big(\xb_{t,a_t^*}\big) - h\big(\xb_{t,a_t}\big)] + \sum_{b \notin \cC}\sum_{t = t_{b}}^{t_{b+1}-1}[h\big(\xb_{t,a_t^*}\big) - h\big(\xb_{t,a_t}\big)]\notag \\
    & \stackrel{\text{(a)}}{\leq} |\cC|\cdot T/B\cdot 2 + \sum_{b \notin \cC}\sum_{t = t_{b}}^{t_{b+1}-1} \bigg[2\hat\beta_t\min\bigg\{\|\gb(\xb_{t,a_t}; \btheta_t)/\sqrt{m}\|_{\Zb_{t_b}^{-1}}, 1\bigg\}  + 
     m^{-1/6}\sqrt{\log m} \xi(T)\bigg]\notag \\
     &\stackrel{\text{(b)}}{\leq} 2|\cC| T/B +m^{-1/6}\sqrt{\log m} \xi(T)T +  2\sqrt{2}\cdot \hat\beta_T\sum_{b \notin \cC}\sum_{t = t_{b}}^{t_{b+1}-1}\min\bigg\{\|\gb(\xb_{t,a_t}; \btheta_t)/\sqrt{m}\|_{\Zb_{t}^{-1}}, 1\bigg\},\notag
\end{align}
where (a) holds since $h\big(\xb_{t,a_t^*}\big) - h\big(\xb_{t,a_t}\big) \leq 1$ and Lemma \ref{lemma:adaptiveboundh} and (b) holds since $b \notin \cC$ and Lemma~\ref{lemma:det}. Hence, 
\begin{align}
    &R_T - 2|\cC| T/B -m^{-1/6}\sqrt{\log m} \xi(T)T \notag \\
    & \quad \stackrel{\text{(a)}}{\leq}  2\sqrt{2}\hat\beta_T\sqrt{T\sum_{t=1}^T\min\bigg\{\|\gb(\xb_{t,a_t}; \btheta_t)/\sqrt{m}\|_{\Zb_{t}^{-1}}^2, 1\bigg\}}\notag \\
    & \quad \stackrel{\text{(b)}}{\leq} 2\sqrt{2}\hat\beta_T\sqrt{T \log\frac{\det \Zb_{T+1}}{\det \lambda \Ib}}\notag,
\end{align}
where (a) holds due to Cauchy-Schwarz inequality and (b) holds due to Lemma \ref{lemma:sumcontext}. We can bound $|\cC|$ as follows:
\begin{align}
    \frac{\det(\Zb_{T+1})}{\det(\lambda \Ib)} = \prod_{b=0}^{B-1}\frac{\det(\Zb_{t_{b+1}})}{\det(\Zb_{t_{b}})} \stackrel{\text{(a)}}{\geq} \prod_{b\in \cC}\frac{\det(\Zb_{t_{b+1}})}{\det(\Zb_{t_{b}})} \stackrel{\text{(b)}}{\geq} 2^{|\cC|},\label{eq:turn1}
\end{align}
where (a) holds since $\Zb_{t_{b+1}} \succeq \Zb_{t_{b}}$ and (b) holds due to the definition of $\cC$.
Eq. \eqref{eq:turn1} suggests that $|\cC| \leq \log(\det(\Zb_{T+1})  / \det(\lambda \Ib))$. 
Therefore, 
\begin{align}
    R_T&\leq 2\log\frac{\det(\Zb_{T+1})}{\det(\lambda \Ib)} T/B +m^{-1/6}\sqrt{\log m} \xi(T)T  + 2\sqrt{2}\hat\beta_T\sqrt{T \log\frac{\det \Zb_{T+1}}{\det \lambda \Ib}}.\label{lemma:final1}
\end{align}
Finally, with a large enough $m$, by the selection of $J$ and Lemma \ref{lemma:newzhou}, we have $\log(\det \Zb_T/\log \lambda \Ib)= \tilde O(\tilde d)$ and $\hat\beta_T = \tilde O(\nu\sqrt{\log(\det \Zb_T/\log \lambda \Ib)} + \sqrt{\lambda }S) =\tilde O(\nu\sqrt{\tilde d} + \sqrt{\lambda}S)$. We also have $m^{-1/6}\sqrt{\log m} \xi(T) T \leq 1$. Substituting these terms into Eq. \eqref{lemma:final1}, we complete the proof. 
\end{proof}

\subsection{Proof of Theorem \ref{thm:rare}}
Let $B'$ be the value of $b$ when Algorithm \ref{algorithm:3} stops. It is easy to see that $B' \leq B$, therefore there are at most $B$ batches. We can first decompose the regret as follows, using Lemma \ref{lemma:adaptiveboundh}:
\begin{align}
    R_T = \sum_{t=1}^T [h\big(\xb_{t,a_t^*}\big) - h\big(\xb_{t,a_t}\big)]
    \leq  2 \sum_{b=0}^{B'}\sum_{t=t_b}^{t_{b+1}-1}\hat\beta_t\min\bigg\{\|\gb(\xb_{t,a_t}; \btheta_{t})/\sqrt{m}\|_{\Zb_{t_b}^{-1}}, 1\bigg\}
    + m^{-1/6}\sqrt{\log m} \xi(T) T,\label{eq:startregret}
\end{align}
To bound Eq. \eqref{eq:startregret}, we have the following two separate cases. First, if $B'<B$, then for all $0 \leq b \leq B'$ and $t_b \leq t<t_{b+1}$, we have $\det(\Zb_t) \leq q \det(\Zb_{t_b})$. Therefore, we have
\begin{align}
R_T - m^{-1/6}\sqrt{\log m} \xi(T) T\notag 
    &\quad \stackrel{\text{(a)}}{\leq}  2\hat\beta_T\sqrt{q} \sum_{b=0}^{B'}\sum_{t=t_b}^{t_{b+1}-1}\min\bigg\{\|\gb(\xb_{t,a_t}; \btheta_{t})/\sqrt{m}\|_{\Zb_{t}^{-1}}, 1\bigg\} \notag\\
    & \quad\stackrel{\text{(b)}}{\leq}  2\hat\beta_T\sqrt{q} \sqrt{T}\sqrt{\sum_{t=1}^T \min \bigg\{\|\gb(\xb_{t,a_t}; \btheta_{t})/\sqrt{m}\|_{\Zb_{t}^{-1}}^2, 1\bigg\}}\notag \\
    & \quad\stackrel{\text{(c)}}{\leq}  2\hat\beta_T\sqrt{T}\sqrt{q}\sqrt{2 \log\frac{\det \Zb_T}{\det \lambda \Ib}},\label{eq:finalregret_1}
\end{align}
where (a) holds due to Lemma \ref{lemma:det}, (b) holds due to Cauchy-Schwarz inequality and (c) holds due to Lemma \ref{lemma:sumcontext}. Second, if $B' = B$, then for all $0 \leq b \leq B-1$, we have $\det(\Zb_{t_{b+1}}) > q \det(\Zb_{t_b})$. For $b = B$ and $t_B \leq t<t_{B+1}$, we have
\begin{align}
    \frac{\det(\Zb_t)}{\det(\Zb_{t_B})} \leq \frac{\det(\Zb_T)}{\det(\lambda \Ib)}\cdot \prod_{b=0}^{B-1}\frac{\det(\Zb_{t_b})}{\det(\Zb_{t_{b+1}})} \leq \frac{\det(\Zb_T)}{\det(\lambda \Ib)}\cdot q^{-B}. \label{eq:detbound}
\end{align}
Therefore, by Eq. \eqref{eq:startregret} the regret can be bounded as
\begin{align}
&R_T - m^{-1/6}\sqrt{\log m} \xi(T) T\notag \\
    &\leq  2 \sum_{b=0}^{B-1}\sum_{t=t_b}^{t_{b+1}-1}\hat\beta_t\min\bigg\{\|\gb(\xb_{t,a_t}; \btheta_{t})/\sqrt{m}\|_{\Zb_{t_b}^{-1}}, 1\bigg\} \notag \\
    &\qquad + \sum_{t=t_B}^{t_{B+1}-1}\hat\beta_t\min\bigg\{\|\gb(\xb_{t,a_t}; \btheta_{t})/\sqrt{m}\|_{\Zb_{t_b}^{-1}}, 1\bigg\}\notag \\
    &\stackrel{\text{(a)}}{\leq} 2\hat\beta_T\sqrt{q} \sum_{b=0}^{B-1}\sum_{t=t_b}^{t_{b+1}-1}\min\bigg\{\|\gb(\xb_{t,a_t}; \btheta_{t})/\sqrt{m}\|_{\Zb_{t}^{-1}}, 1\bigg\} \notag \\
    &\qquad + \hat\beta_T\sqrt{\frac{\det(\Zb_T)}{q^B\det(\lambda \Ib)}} \cdot \sum_{t=t_B}^{t_{B+1}-1}\min\bigg\{\|\gb(\xb_{t,a_t}; \btheta_{t})/\sqrt{m}\|_{\Zb_{t}^{-1}}, 1\bigg\}\notag \\
    & \leq 2\hat\beta_T\max\bigg\{\sqrt{q},\sqrt{\frac{\det(\Zb_T)}{q^B\det(\lambda \Ib)}} \bigg\} \cdot\sum_{t=1}^T\min\bigg\{\|\gb(\xb_{t,a_t}; \btheta_{t})/\sqrt{m}\|_{\Zb_{t}^{-1}}, 1\bigg\}\notag\\
    & \stackrel{\text{(b)}}{\leq}  2\hat\beta_T\max\bigg\{\sqrt{q},\sqrt{\frac{\det(\Zb_T)}{q^B\det(\lambda \Ib)}} \bigg\} \sqrt{T}\cdot \sqrt{\sum_{t=1}^T \min \bigg\{\|\gb(\xb_{t,a_t}; \btheta_{t})/\sqrt{m}\|_{\Zb_{t}^{-1}}^2, 1\bigg\}} \notag \\
    & \stackrel{\text{(c)}} {\leq}   2\hat\beta_T\sqrt{T}\max\bigg\{\sqrt{q},\sqrt{\frac{\det(\Zb_T)}{q^B\det(\lambda \Ib)}} \bigg\}\sqrt{2 \log\frac{\det \Zb_T}{\det \lambda \Ib}},\label{eq:finalregret}
\end{align}
where (a) holds due to Lemma \ref{lemma:det} and the following two facts: $\det(\Zb_t) \leq q \det(\Zb_{t_b})$ for all $0 \leq b \leq B-1$, $t_b \leq t<t_{b+1}$; Eq. \eqref{eq:detbound} for $b = B$, (b) holds due to Cauchy-Schwarz inequality and (c) holds due to Lemma \ref{lemma:sumcontext}. Combining Eqs. \eqref{eq:finalregret_1} and \eqref{eq:finalregret}, we have under both $B'<B$ and $B' = B$ cases, Eq. \eqref{eq:finalregret} holds. Finally, by Lemma \ref{lemma:newzhou} and the selection of $J$ and $m$, we have
\begin{align}
    &\frac{\det \Zb_T}{\log \lambda \Ib} = \tilde O((1+TK/\lambda)^{\tilde d}),\notag \\
    &\log\frac{\det \Zb_T}{\log \lambda \Ib} = \tilde O(\tilde d),\notag \\
    &\hat\beta_T  = \tilde O\bigg(\nu\sqrt{\log\frac{\det \Zb_T}{\log \lambda \Ib}} + \sqrt{\lambda}S\bigg) = \tilde O\Big(\nu\sqrt{\tilde d} + \sqrt{\lambda}S\Big).\label{eq:sub1}
\end{align}
Thus, substituting Eq. \eqref{eq:sub1} into Eq. \eqref{eq:finalregret}, we have
\begin{align}
    R_T &= \tilde O\bigg(\Big(\nu\sqrt{\tilde d} + \sqrt{\lambda}S\Big)\cdot\sqrt{T} \cdot \max\Big\{\sqrt{q}, \sqrt{(1+TK/\lambda)^{\tilde d}/q^B}\Big\} \cdot \sqrt{\tilde d}\bigg)\notag \\
    & = O\bigg(\sqrt{\max\{q, (1+TK/\lambda)^{\tilde d}/q^B\}}\Big(\nu\tilde d + \sqrt{\lambda\tilde d}S\Big)\sqrt{T}  \bigg).\notag
\end{align}

\section{Conclusions}

In this paper, we proposed the BatchNeuralUCB algorithm which combines neural networks with the UCB technique to balance exploration-exploitation tradeoff while keeping the total number of batches limited. We studied both fixed and adaptive batch size settings and proved that BatchNeuralUCB achieves the same regret as the fully sequential version. Our theoretical results are complemented by experiments on both synthetic and real-world datasets.

\appendix

\vskip 0.3in


\section{More on Experiments}\label{app:additional_sim}

\subsection{Details of Experiments in Section \ref{sec:sim}}
For experiment in Section \ref{sec:sim-synth}, we select parameters as follows. For LinUCB, we again search over the regularization parameter $\lambda \in \{0.001, 0.01, 0.1, 1\}$ and exploration parameter $\beta \in \{0.001, 0.01, 0.1, 1\}$ and pick the best model. For NeuralUCB and $\algname$, we train two-layers neural networks with $m=200$ hidden layers. For both of these algorithms, we find that choosing parameters $\lambda = 0.01$ and $\beta_t = 0.001$ works pretty well. Finally, in the iterations where the policy is updated, the parameters of neural networks are updated for $J=200$ iterations using stochastic gradient descent with $\eta = 0.01$.

For experiment in Section \ref{sec:sim-real}, we select the parameters as follows. For LinUCB, we search over the space of regularization parameters $\lambda \in \{0.001, 0.01, 0.1, 1\}$ and the exploration parameter $\beta \in \{0.001, 0.01, 0.1, 1\}$ and report the model with the lowest average regret. For BatchNeuralUCB and NeuralUCB algorithms, we consider two-layer neural networks with $m=100$ hidden layers. For both algorithms, during iterations that policy update is allowed (at the end of batches for BatchNeuralUCB and every iteration for NeuralUCB), we use stochastic gradient descent with $J=200$ iterations and $\eta = 0.05$ to update the network parameters. For both of these algorithms, we find that choosing parameters $\lambda = 0.001$ and $\beta_t = 0.001$ works well. 

\subsection{Additional Experiments}

We repeat our numerical experiments in Section \ref{sec:sim}, with one additional synthetic dataset as well as one additional real dataset. 

\subsection*{Synthetic Data with Quadratic Reward}

\begin{figure*}[hptb]
    \centering
    \includegraphics[width=0.9\textwidth]{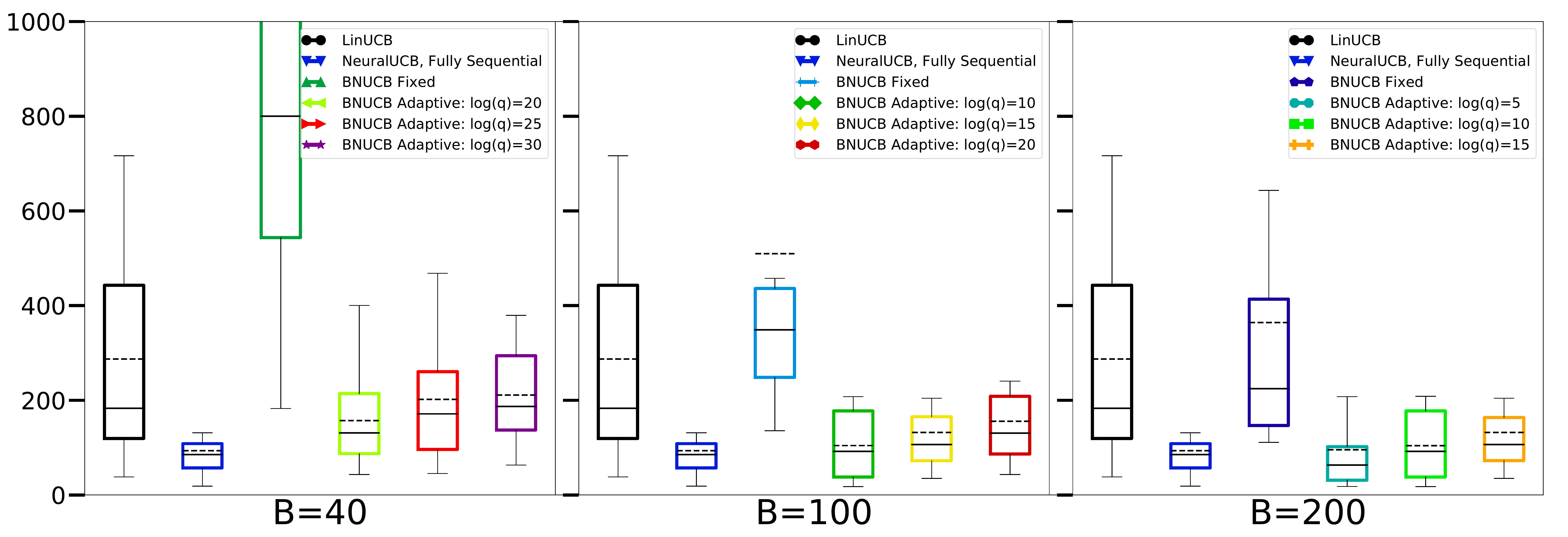}
    \caption{Distribution of per-instance regret on Synthetic data with quadratic reward. The solid and dashed lines indicate the median and the mean respectively. Note that BNUCB stands for $\algname$.} \label{fig:quadratic}
\end{figure*}

\begin{figure*}[hptb]
    \centering
    \includegraphics[width=0.75\textwidth]{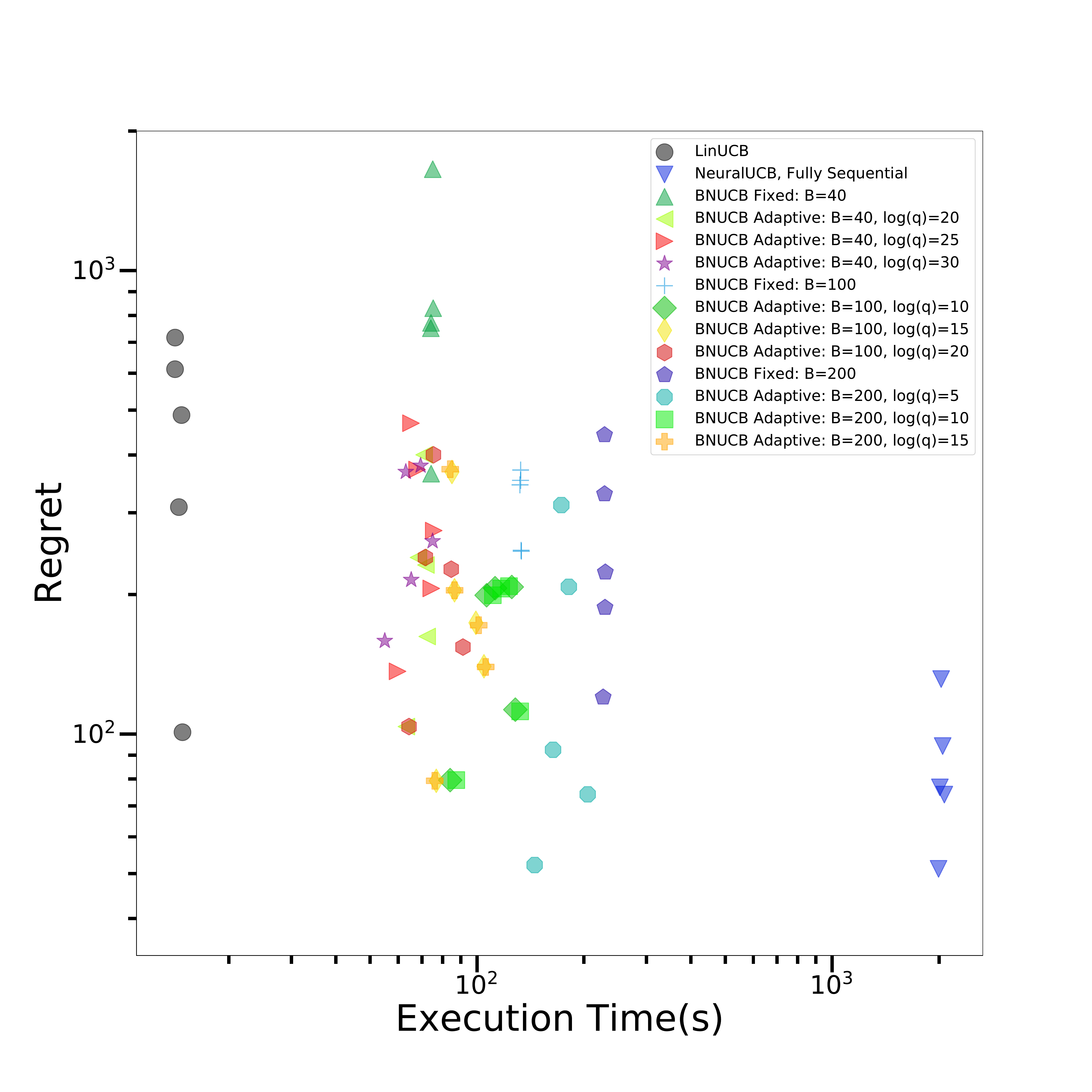}
    \caption{Scatter plot of regret vs execution time for $5$ instances selected at random on synthetic data with quadratic reward.} \label{fig:quadratic-time}
\end{figure*}

We compare the performance of our proposed $\algname$ (BNUCB) algorithm with fixed and adaptive size batches for several values of $B$ and $q$ with two fully sequential benchmarks on synthetic data generated as follows. Let $T=2000, d=4, K=10$, and consider the quadratic reward function given by $r_{t,a} = \xb_{t,a}^\top \Ab^\top \Ab \xb_{t,a} + \xi_t$ where $\{\xb_{t,1}, \xb_{t,2}, \cdots, \xb_{t,K}\}$ are contexts generated at time $t$ according to $U[0, 1]^d$ independent of each other. Each entry of the matrix $\Ab \in \RR^{d \times d}$ is generated according to $N(0, 1)$. The noise $\xi_t$ is generated according to $N(0, 0.25)$ independent of all other variables. 

We consider the same benchmarks as those presented in Section \ref{sec:sim}. Similar to Section \ref{sec:sim}, we repeat our simulations for $10$ times and plot the following charts: (1) the box plot of total regret of algorithms with its standard deviation, and (2) scatter plot of the total regret vs execution time for 5 simulations randomly selected out of all $10$ simulations.

We select parameters as follows. For LinUCB, we again search over the regularization parameter $\lambda \in \{0.001, 0.01, 0.1, 1\}$ and exploration parameter $\beta \in \{0.001, 0.01, 0.1, 1\}$ and pick the best model. For NeuralUCB and $\algname$, we train two-layers neural networks with $m=100$ hidden layers. For both of these algorithms, we find that choosing parameters $\lambda = 0.01$ and $\beta_t = 0.01$ works pretty well. Finally, in the iterations where the policy is updated, the parameters of neural networks are updated for $J=200$ iterations using stochastic gradient descent with $\eta = 0.005$.

\paragraph{Results.} The results are depicted in Figures \ref{fig:quadratic} and \ref{fig:quadratic-time}. As can be observed, due to model misspecifications, LinUCB does not perform very well. On the other hand, the fully sequential NeuralUCB algorithm outperforms all other algorithms. However, this algorithm is computationally very expensive and it requires almost $2000$ seconds per execution. The proposed BatchNeuralUCB algorithms with both fixed and adaptive batch sizes performs very well. In particular, the adaptive BatchNeuralUCB algorithm with only $B=40$ batches and all configurations for $q$, i.e. $\log(q) = 20, 25, 30$, outperforms LinUCB and achieves a very close performance to that of NeuralUCB while enjoying a very fast execution time of around $90$ seconds on average. The gap in the regret with the fully sequential NeuralUCB algorithm becomes smaller for configurations with $B=100$ and it becomes almost insignificant for $B=200$.

\subsection*{Real Magic Telescope Data}
\begin{figure*}[hptb]
    \centering
    \includegraphics[width=0.9\textwidth]{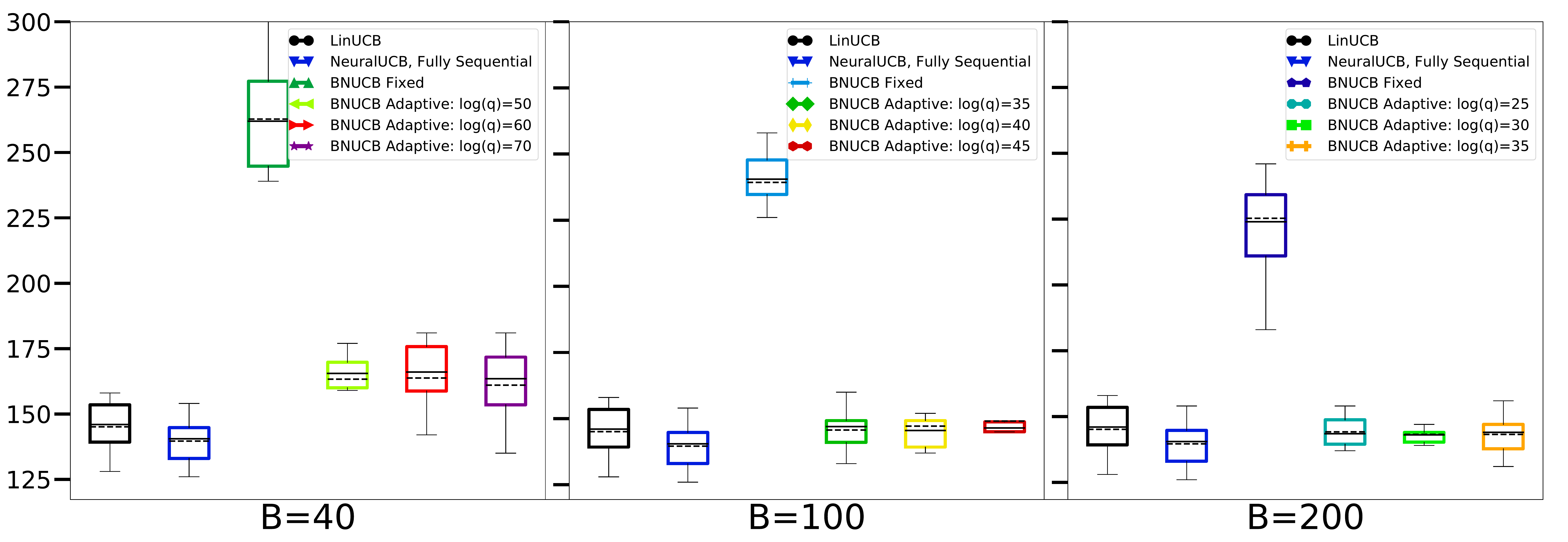}
    \caption{Distribution of per-instance regret on Magic Telescope dataset. The solid and dashed lines indicate the median and the mean respectively. Note that BNUCB stands for $\algname$.} \label{fig:magic}
\end{figure*}
\begin{figure*}[ht]
    \centering
    \includegraphics[width=0.75\textwidth]{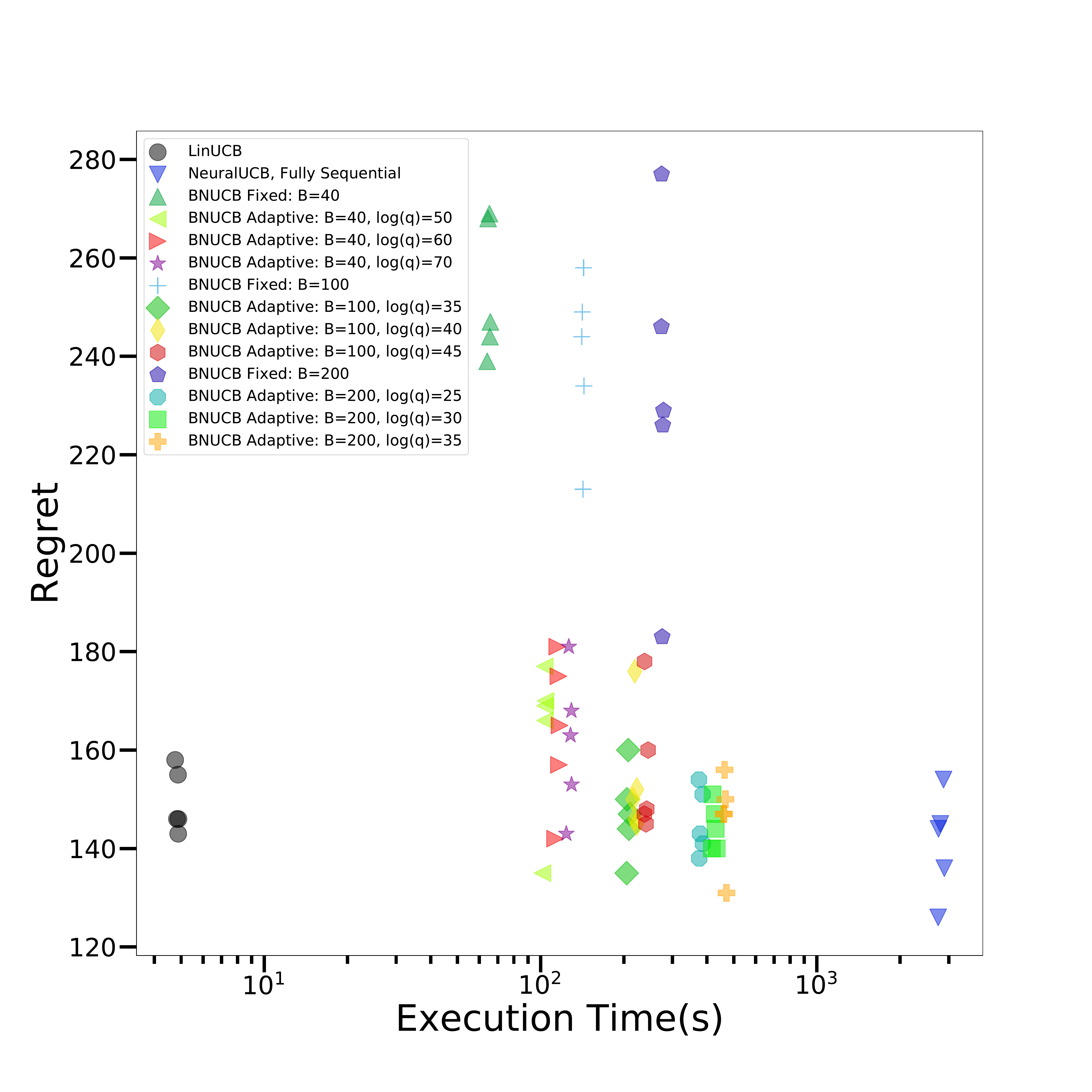}
    \caption{Scatter plot of regret vs execution time for $5$ instances selected at random on Magic Telescope dataset.} \label{fig:magic-time}
\end{figure*}
We repeat the simulation in Section \ref{sec:sim-real} this time using the MAGIC Gamma Telescope dataset from the UCI repository.\footnote{This dataset can be found here \href{http://archive.ics.uci.edu/ml/datasets/MAGIC+GAMMA+Telescope}{http://archive.ics.uci.edu/ml/datasets/MAGIC+GAMMA+Telescope}} The dataset is originally designed for classifying gamma vs hadron meteor showers. It contains $n=19020$ samples each with $d=10$ features, each belonging to one of $K=2$ classes. For each sample $s_t = (c_t, l_t)$ with context $c_t \in \RR^d$ and label $l_t \in [K]$, we consider the zero-one reward defined as $r_{t, a_t} = \ind\{a_t = l_t\}$ and generate our context vectors as $\{\xb_{t,a} \in \RR^{Kd}: \xb_{t,a} = [\underbrace{0, \cdots, 0}_{a-1 \text{~times}}, c_i, \underbrace{0, \cdots, 0}_{K-a \text{~times}}], a \in [K]\}$. 

We compare the performance of algorithms on similar metrics as those described in Section~\ref{sec:sim-real} over $10$ random Monte Carlo simulations and report the results. For each instance, we select $T=2000$ random samples without replacement from the dataset and run all algorithms on that instance. Note that in this simulation, both NeuralUCB and $\algname$ use two-layer neural networks with $m=100$ hidden layers. 

We select parameters as follows. For LinUCB, we again search over the regularization parameter $\lambda \in \{0.001, 0.01, 0.1, 1\}$ and exploration parameter $\beta \in \{0.001, 0.01, 0.1, 1\}$ and pick the best model. For NeuralUCB and $\algname$, we train two-layers neural networks with $m=400$ hidden layers. For both of these algorithms, we find that choosing parameters $\lambda = 0.001$ and $\beta_t = 0.001$ works pretty well. Finally, in the iterations where the policy is updated, the parameters of neural networks are updated for $J=200$ iterations using stochastic gradient descent with $\eta = 0.05$.

\paragraph{Results.} The results are depicted in Figures \ref{fig:magic} and \ref{fig:magic-time}. As can be observed, both fully sequential algorithms, i.e. LinUCB and NeuralUCB perform relatively well with NeuralUCB outperforming LinUCB by a slight margin. Across batch models, adaptive ones perform much better than fixed ones. For instance, BatchNeuralUCB with $B=40$ adaptively chosen using $\log(q) = 50, 60,$ or $70$ outperforms BatchNeuralUCB with $B=200$ batches of fixed size. The average regret of both fixed and adaptive versions reduces as the number of batches increases. In particular, the regret of adaptive models with $B=100$ and $B=200$ are very close and indistinguishable from the fully sequential NeuralUCB algorithm, while taking almost $8$ times less time for execution.

\bibliographystyle{ims}
\bibliography{reference}
\end{document}